\documentclass{article}
\usepackage{natbib}
\PassOptionsToPackage{numbers, compress}{natbib}

%





\usepackage[utf8]{inputenc} 
\usepackage[T1]{fontenc}    
\usepackage{hyperref}       
\usepackage{url}            
\usepackage{booktabs}       
\usepackage{amsfonts}       
\usepackage{nicefrac}       
\usepackage{microtype}      
\usepackage{xcolor}         


\usepackage{amsmath,algorithm}
\usepackage{algpseudocode}
\usepackage{stfloats}
\usepackage{graphicx}
\usepackage{subfigure}
\usepackage{booktabs}
\usepackage{amsmath}
\usepackage{amssymb}
\usepackage{mathtools}
\usepackage{amsthm}
\usepackage{bbm}
\usepackage[capitalize,noabbrev]{cleveref}
\usepackage[textsize=tiny]{todonotes}

\theoremstyle{plain}
\newtheorem{theorem}{Theorem}[section]
\newtheorem{proposition}[theorem]{Proposition}
\newtheorem{lemma}[theorem]{Lemma}

\theoremstyle{definition}
\newtheorem{definition}[theorem]{Definition}

\theoremstyle{remark}
\newtheorem{remark}[theorem]{Remark}

\theoremstyle{plain}
\newtheorem{innernamedtheorem}{Theorem}[section]

\newcommand{\RR}{\mathbb{R}}
\newcommand{\EE}{\mathbb{E}}

\title{When and how can inexact generative models still sample from the data manifold?}

%

\author{%
    Nisha Chandramoorthy\thanks{ Corresponding author} \\
  Department of Statistics\\
  Committee on Computational and Applied Mathematics\\
  The University of Chicago, Chicago, IL, 60637 \\
  \texttt{nishac@uchicago.edu} \\
  \and
  Adriaan de Clercq \\
  Department of Statistics\\
  Committee on Computational and Applied Mathematics\\
  The University of Chicago, Chicago, IL, 60637 \\
}

\begin{document}

\maketitle

\begin{abstract}
    A curious phenomenon observed in some dynamical generative models is the following: despite learning errors in the score function or the drift vector field, the generated samples appear to shift \emph{along} the support of the data distribution but not \emph{away} from it. In this work, we investigate this phenomenon of \emph{robustness of the support} by taking a dynamical systems approach on the generating stochastic/deterministic process. Our perturbation analysis of the probability flow reveals that infinitesimal learning errors cause the predicted density to be different from the target density only on the data manifold for a wide class of generative models. Further, what is the dynamical mechanism that leads to the robustness of the support? We show that the alignment of the top Lyapunov vectors (most sensitive infinitesimal perturbation directions) with the tangent spaces along the boundary of the data manifold leads to robustness and prove a sufficient condition on the dynamics of the generating process to achieve this alignment. Moreover, the alignment condition is efficient to compute and, in practice, for robust generative models, automatically leads to accurate estimates of the tangent bundle of the data manifold. Using a finite-time linear perturbation analysis on samples paths as well as probability flows, our work complements and extends existing works on obtaining theoretical guarantees for generative models from a stochastic analysis, statistical learning and uncertainty quantification points of view. Our results apply across different dynamical generative models, such as conditional flow-matching and score-based generative models, and for different target distributions that may or may not satisfy the manifold hypothesis. 
\end{abstract}

\section{Introduction}

Given samples from a \emph{target} distribution, a generative model (GM) outputs more samples (approximately) from the target. Most generative models accomplish this task using a dynamical formulation of probabilistic measure transport. They solve an optimization problem for a vector field or a \emph{drift} term of a deterministic or stochastic process that produces the desired target samples in finite time.
Flows under the learned vector field transports probability densities between some source density and the target. For instance, in diffusion models or score-based generative models  (SGMs)  and their variants \cite{song2020score, sohl2015deep, debortoli2022riemannianscorebasedgenerativemodelling},
the vector field corresponds to a score of a noising process (e.g., an Orstein-Uhlenbeck process starting initialized with the given target samples) and the source density is that achieved at the end of the noising process.
On the other hand, in conditional flow matching variants \cite{lipman2023flow, tong2024improving, pmlr-v238-tong24a, tian2024liouville}, stochastic interpolants and Schr\"odinger bridge-based variants \cite{de2021diffusion, albergo2023stochastic,albergo2022building} and
using Neural ODEs \cite{yildiz2019ode2vae, chen2018neural}, the learned vector field 
is constructed by specifying a path (e.g., straight line path or optimal transport paths) on sample space, between samples according to a fixed source density and the given target samples. In every case, there are inevitably errors incurred in 
learning the vector field as a neural network for multiple reasons, including optimization errors, approximation errors, 
discretization errors during time integration (of the underlying deterministic or stochastic process) and finite sample errors. These errors will propagate through the generating process and therefore be reflected in the 
predicted target samples. However, when are the predicted target samples close to high density regions of the target? How can we formalize this error propagation?
Further, how can we determine the robustness of a given GM to these errors in the vector field? In this work, we provide answers to these questions by taking a dynamical systems approach to GMs.
\begin{figure}
\includegraphics[width=0.2\textwidth]{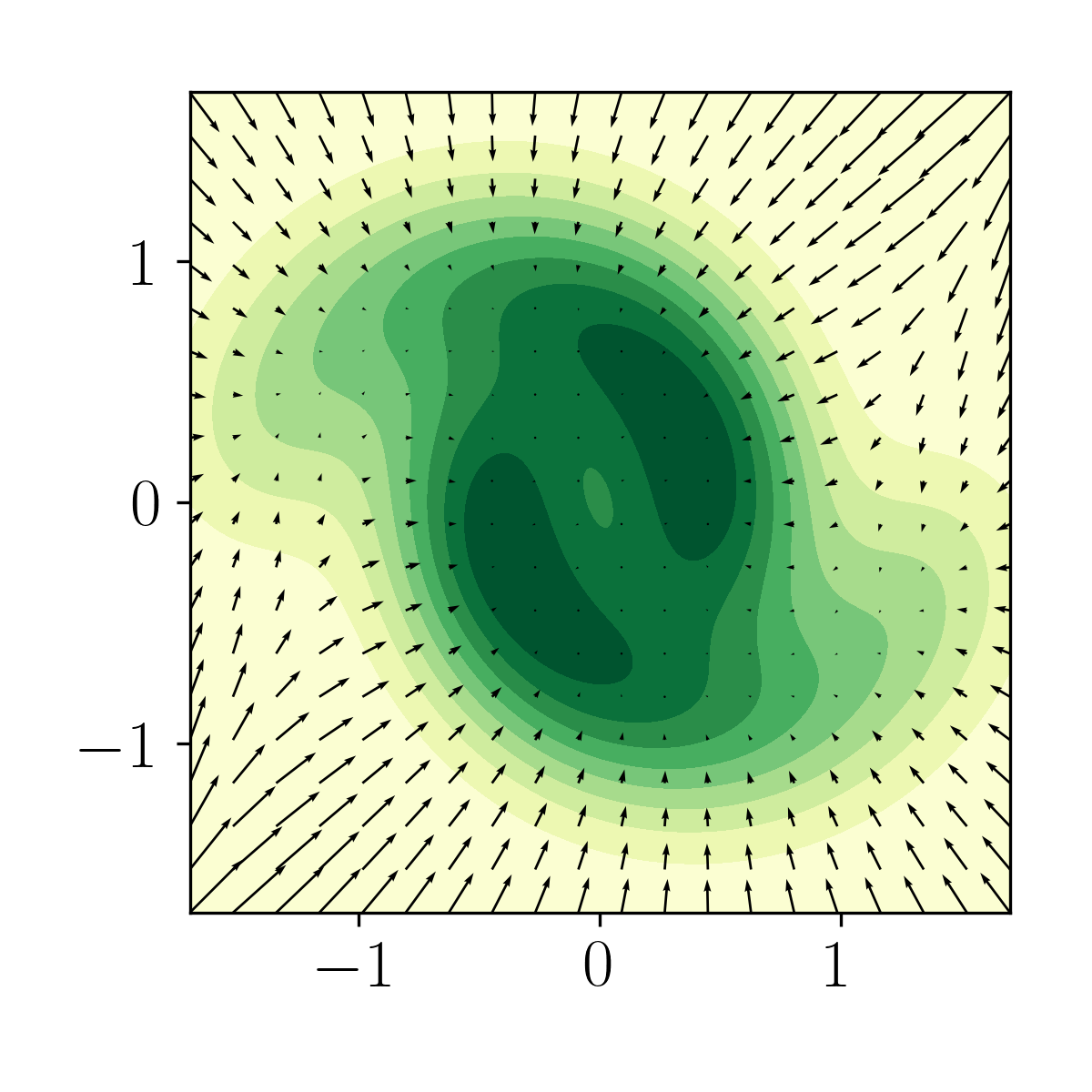}
\includegraphics[width=0.2\textwidth]{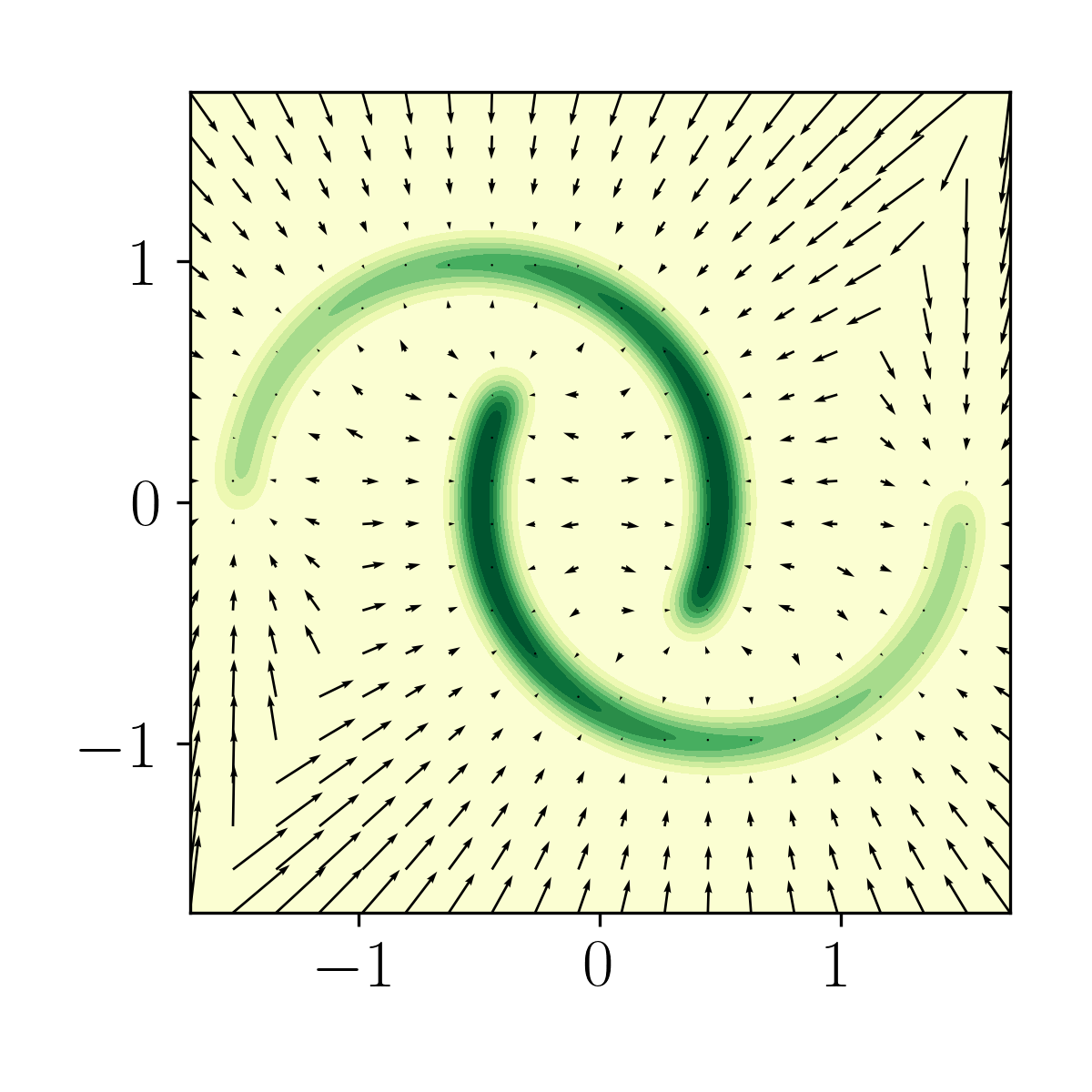}%
\includegraphics[width=0.2\textwidth]{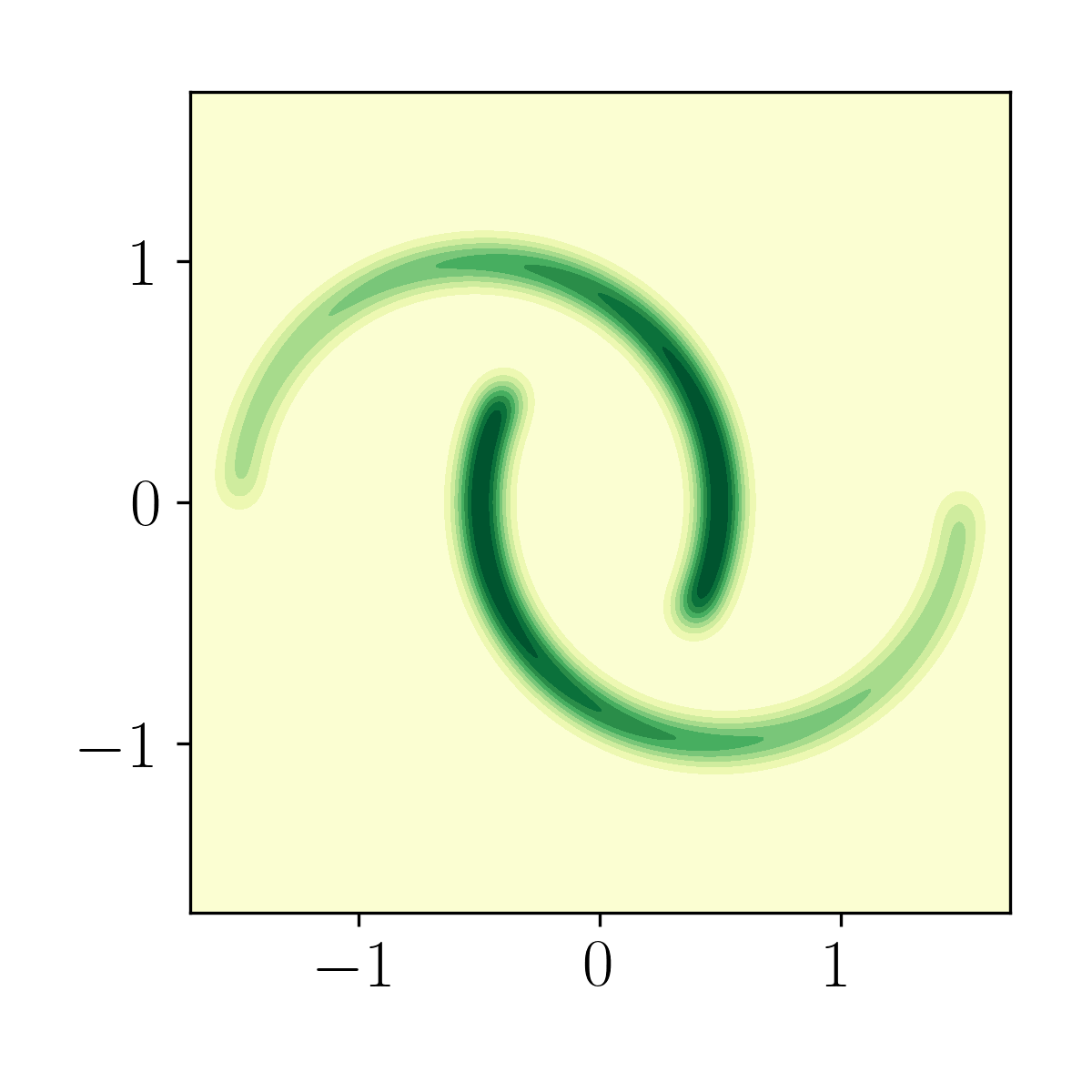}%
\includegraphics[width=0.2\textwidth]{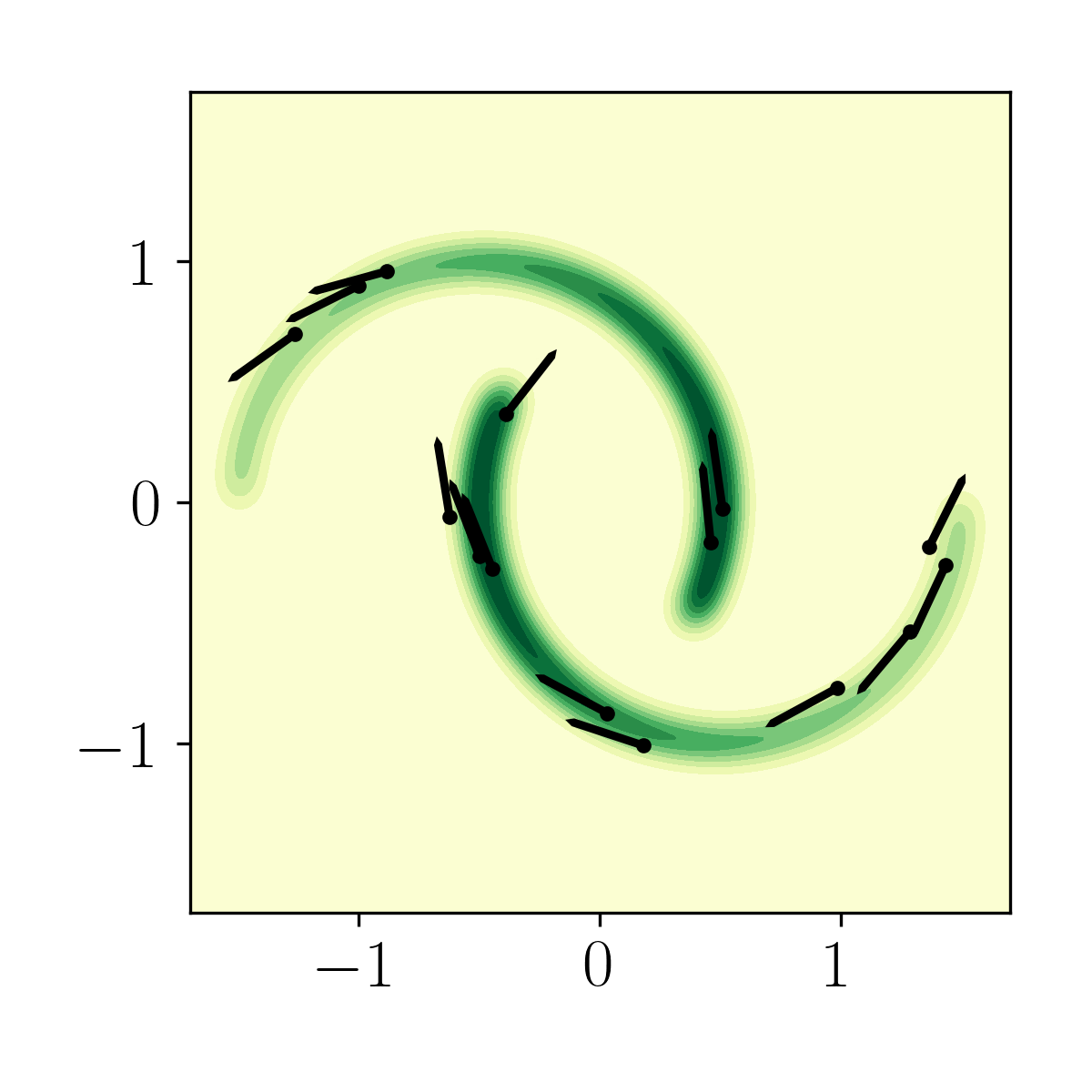}%
\includegraphics[width=0.2\textwidth]{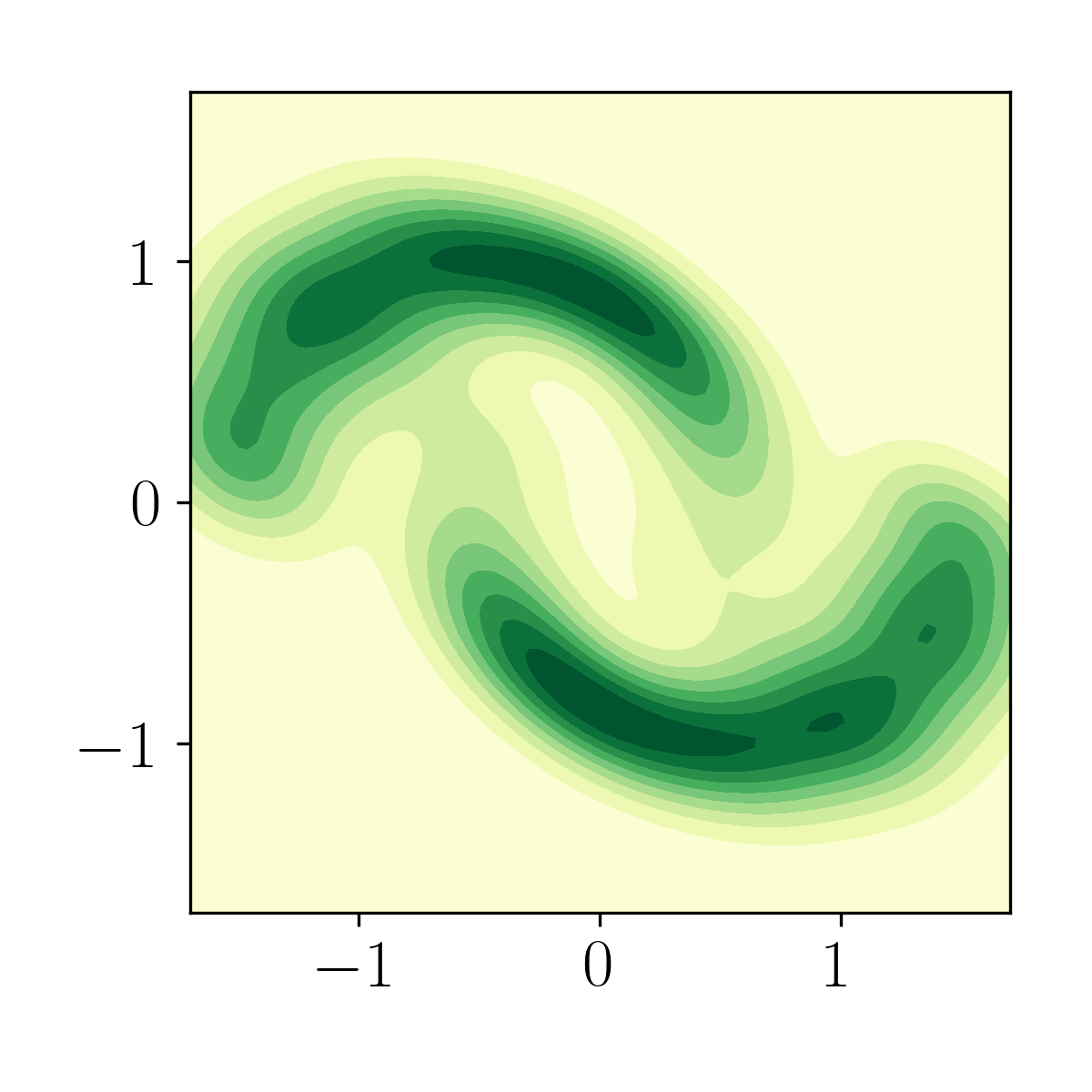}%
\caption{
Robustness of the support under perturbations of SGMs. Columns 1 and 2: score vector field (lines) and the density (contours) near the start time and toward the end respectively. Notice that the score field is nearly orthogonal to the target support. Column 3: generated target density. Column 4: Leading finite-time Lyapunov vectors at the end, which are noticeably aligned with the target support. Column 5: kernel density estimate of the distribution generated by a process corrupted by \emph{large} errors in the score. The density is shifted primarily tangent to the data manifold.
    }
    \label{fig:pertLVs}
\end{figure}
Even as the use of GMs proliferates (see e.g., \cite{yang2023diffusion} for a review), their utility in critical applications, e.g., in climate predictions \cite{mardani2025residual} especially hinges on theoretical guarantees for their practical implementations. Significant research progress has been achieved in obtaining convergence guarantees (see e.g., \cite{de2022convergence, lee2023convergence}; section \ref{sec:relatedWork}) for SGM-variants under minimal assumptions on the target as learning errors in the score vanish. But rather than convergence to the distribution, a property that may be more pertinent to practical applications across engineering and data science is the \emph{robustness of support}, which is the subject of this work. In other words, since learning errors are inevitable, a characterization of when a generative model will still be able to capture the high-density regions, or produce \emph{physically relevant} or plausible samples, would be enormously useful.

Our definition of robustness is motivated by many empirical and theoretical studies on the feature learning and generalization of SGMs under the manifold hypothesis \cite{pidstrigach2022score, chen2023score}. Figure \ref{fig:pertLVs} depicts these observations for a two-dimensional two-moons target density (supported almost on a 1D curve), generated by an SGM using analytical scores, shown in the third column.
When at each step of the generating process, we make a deterministic error in the score estimation, we observe that the generated distribution shifts along the support, as illustrated in Figure \ref{fig:pertLVs} (rightmost column). 
Surprisingly, the errors in the score estimation, even when large, do not cause the generated samples to move to zero probability regions 
of the target, which are off the moons for the target shown in Figure \ref{fig:pertLVs}. A natural question that arises is when such a robustness of the support may be expected. To answer this question, we formally analyze this behavior of the time-inhomogeneous Markov chain that generates the samples as a random dynamical system. As a byproduct of our analysis, we characterize precisely how geometric information about the support is extracted by robust generative models. Our main contributions are as follows:

\textbf{Robustness of support.} Under mild regularity assumptions on the generating dynamics, we show that an infinitesimal change in the predicted target is only supported where the target density is supported as the
     perturbation size tends to 0. This explains our observed robustness of support even in high-dimensional data distributions (see Appendix \ref{sec:cifar10} for response to score perturbations on the CIFAR10 distribution). 
     
\textbf{Alignment.} 
When the leading finite-time Lyapunov vectors (LVs; see section \ref{sec:robustnessConsistency} for precise definitions), which represent the principal directions of deformation of the sample space at the final time, have a negligible component normal to the target support, we show that the generating process learns the support. This is evident in Figure \ref{fig:pertLVs}(column 4), where the leading LVs (shown as black lines) are tangent to the support, and robustness of the support holds. 

\textbf{Dynamical mechanism for alignment.} We prove sufficient conditions that characterize when the generating dynamics has LVs that align with the support of the data distribution. One sufficient condition, intuitively, turns out to be that the vector field acts as an attracting force to the support, which is satisfied for the SGM dynamics in Figure \ref{fig:pertLVs} (columns 1 and 2).

\textbf{Practical implications.} Note that there are many efficient, numerically stable algorithms, which can be implemented with automatic differentiation, to compute these finite-time Lyapunov vectors. Thus, 
we can efficiently obtain the tangent spaces to the data manifold using any \emph{aligned} GM. Finally, we show that if a GM has the alignment property, so does the same GM under learning errors. 
Since our analyses do not assume a specific dynamics, our results apply to all practical implementations (e.g., using predictor-corrector integration or ODE integration) and across different dynamical generative models (e.g., SGMs or conditional flow matching).

\section{Preliminaries: generative models as random dynamical systems}
Let $p_\mathrm{data}$ be the data (target) distribution with a compact support $M \subset \mathbb{R}^D.$ The distribution $p_\mathrm{data}$ is singular under the manifold hypothesis, i.e., $\mathrm{dim}(M) = d < D.$
Let $\rho_0$ be a given  source probability density in $\mathbb{R}^D.$ Fixing $\rho_0$ and a density that approximates $p_\mathrm{data},$ there can be infinitely many \emph{couplings} between them. That is, for many sequences of functions $\{F_t\}_{0\leq t\leq \tau},$ sample paths $X_t = F_t(X_{t-1})$ can have the same starting and ending distributions, i.e., $X_0 \sim \rho_0$ and $X_\tau \sim p_\mathrm{data}.$ What enables a given sequence $\{F_t\}$ to have the robustness of support property? To study this question, we treat the sequence $F_t$ as a (random) dynamical system.

In score generative models or diffusion models \cite{song2020score}, the dynamics, $F_t,$ on path space is the stochastic reverse process. At each time $t=0,1,\cdots, \tau-1$, the map $F_t^\Xi$ on sample space represents the dynamics under an instance of the \emph{noise} path, denoted by $\Xi.$ This noise path is a sequence $\Xi = \{\xi_0, \cdots, \xi_{\tau-1}\}$ of independent Gaussian random variables, which must be viewed as a (scaled) time discretization of a Brownian motion. Since, with a known score, a deterministic, time-dependent process (time-integrated solution of a probability flow ODE) can effect the same dynamics on probability space, for simplicity, we consider deterministic, non-autonomous maps $F_t$ throughout. However, note that the perturbation analysis both on probability space and tangent space in the remainder of this paper applies pathwise (for each $\Xi$) (see \cite{arnold1995random, kifer2012ergodic} for a rigorous treatment of Lyapunov vectors and exponents for random dynamical systems), even if we consider $F_t$ as a stochastic process.

In a dynamical formulation of a GM, $F_t$ is typically partially represented by a neural network. In general, $F_t(x) = x + \delta t\: v_t(x),$ where the drift vector field $v_t$ (a neural network) is learned using samples from $p_\mathrm{data}$ and $\delta t$ is a small timestep. For instance, 
in a score-based generative model based on an Ornstein-Uhlenbeck noising (forward) process, $F_t,$ the dynamical system representing the reverse process can be written as 
$v_t(x) = \theta\; x + \sigma^2 s_t(x), t \leq \tau,$ where the score functions of the OU process $,s_t = \nabla \log \rho_t,$ are represented using a neural network. With no learning errors, we may write that $F^\tau_\sharp \rho_0 = \rho_\tau,$ where the pushforward notation ($\sharp$) for probability densities means the following: $T_\sharp \rho = \pi$ implies that if $x \sim \rho$, $T(x) \sim \pi.$ 
Explicitly, the pushforward operator is a linear operator on densities, and in the context of dynamical systems, called the Frobenius-Perron or the transfer operator: $\mathcal{L}_t \rho := \rho \circ F_t^{-1}/|\mathrm{det}\; dF_t| \circ F_t^{-1},$ assuming that $F_t \in \mathcal{C}^1(\mathbb{R}^D)$. We use exponential notation, $F^{\tau+1} := F_{\tau}\circ F^{\tau},$ with $F^0 = \mathrm{Id},$ to denote $\tau$ iterations of the 
process.

\section{Why GMs sample along the support even under learning errors}
\label{sec:robustness}

In this section alone, we assume that $p_\mathrm{data}$ is absolutely continuous (i.e., $d=D$) and its density is $\rho_\tau$: i.e., the generative model $F^\tau$ is exact/convergent (ignoring errors on $\mathcal{O}(\delta t)$). Consider now the perturbed dynamical system, $F_{t,\epsilon}(x) = x + \delta t\: v_t(x) + \epsilon \: \chi_t(x),$ where the vector field $\chi_t$ represents a time-dependent tangent perturbation, and the original model $F_t$ is recovered with $\epsilon = 0.$ It is important to note that \emph{tangent perturbation} does \emph{not} mean perturbation along the tangent space $T_x M,$ but rather just a perturbation (a tangent vector) in $T_x \mathbb{R}^D.$ In other words, the perturbation $\chi_t(x)$ at an $x$ can be in any direction in $\mathbb{R}^D.$  
Perturbations to the dynamics here are models for statistical errors in the learned drift, $v_t$. That is, we define a parameterized family of maps $\epsilon \to F_{t,\epsilon}$ at each time $t,$ with $F^{t}_\epsilon = F_{t-1, \epsilon} \circ F^{t-1}_\epsilon.$
The corresponding dynamics on probability densities is given by $\rho_{t, \epsilon} = \mathcal{L}_{t,\epsilon} \rho_{t-1, \epsilon},$ with the corresponding Frobenius-Perron/transfer operator, $\mathcal{L}_{t,\epsilon}.$ 
We omit the subscript $\epsilon$ to refer to the unperturbed system, $\epsilon = 0.$ For instance, $\mathcal{L}_{t,\epsilon}$ and $\mathcal{L}_t$ are respectively the Frobenius-Perron operator at time $t$ of the perturbed and unperturbed systems. Using exponential notation for $\mathcal{L}_{t,\epsilon}$ as well, fixing the source density $\rho_0$ we write the perturbed density, $\rho_{\tau, \epsilon} = \mathcal{L}^{\tau}_\epsilon \rho_0.$ 

Let $u$ be a vector field such that the direction derivative of $\rho_\tau,$ $u(\rho_\tau) (x) := \lim_{\epsilon \to 0} (\rho_\tau(x + \epsilon u(x)) - \rho_\tau(x))/\epsilon $ describes the perturbation in the predicted density due to the $\epsilon$ perturbation of the dynamics. By definition, we have that $u(\rho_\tau) =  \partial_\epsilon|_{\epsilon = 0} \mathcal{L}_\epsilon^\tau \rho_0,$ so that the formal derivative $\partial_\epsilon|_{\epsilon = 0} \mathcal{L}_\epsilon^\tau$ defines the vector field $u.$ Intuitively, the derivative $\partial_\epsilon|_{\epsilon = 0} \mathcal{L}_\epsilon^\tau$ gives the \emph{statistical response} after time $\tau$ of the dynamics $\{F_\epsilon^t\}.$ We refer to $u$ as the target response field, since it is a vector field that measures the direction and magnitude of the perturbation to the target density. We now understand the target response field, $u,$ through explicit expressions for the statistical response operator, $\partial_\epsilon|_{\epsilon = 0} \mathcal{L}_\epsilon^\tau.$

For a given test function $f: M \to \mathbb{R},$ $\mathbb{E}_{x \sim \rho_\tau} f(x) := \langle f, \rho_\tau\rangle  = \langle f, \mathcal{L}^{\tau} \rho_0\rangle = \langle f \circ F^{\tau}, \rho_0\rangle,$ from a change-of-variables in the integration (one can view this as a conservation principle for probability mass). Here, $\langle \cdot, \cdot \rangle$ refers to an $L^2$ inner product with respect to Lebesgue measure on $\mathbb{R}^D$. Now for any observable $f,$ we define its time-dependent response, $r_t(f),$ to be a scalar field that denotes the change in its value along a path due to the perturbation to the dynamics. More precisely, let $r_t(f) := \lim_{\epsilon\to 0} (1/\epsilon)\left(  f \circ F^{t}_\epsilon   - f \circ F^t  \right),$ so that, by taking adjoints, we have $\langle f, \partial_\epsilon|_{\epsilon = 0} \mathcal{L}_\epsilon^{\tau} \rho_0\rangle = \langle f, u(\rho^{\tau})\rangle = \langle r_\tau(f), \rho_0\rangle.$ Now, using the definition of $r_t(f),$ for any $f$ and $t,$ we can derive the following recursive relationship,  
\begin{align}
\notag
r_t(f) &= \lim_{\epsilon \to 0} \dfrac{1}{\epsilon} \left[ \left(f\circ F_\epsilon^{t} - f \circ F_{t-1, \epsilon} \circ F^{t-1} \right) + \left(f \circ F_{t-1, \epsilon} \circ F^{t-1} - f \circ F^{t} \right) \right] \\
\label{eq:recursiveRelationshipForr}
&= r_{t-1}(f \circ F_{t-1}) + (\nabla f \circ F_{t-1} \cdot \chi_{t-1}) \circ F^{t-1}. 
\end{align}
Here, we have used the definition of the perturbation vector field $\chi_t$ at time $t$ by $\chi_t(x) := \lim_{\epsilon\to 0} (1/\epsilon) (F_{t,\epsilon}(x) - F_t(x)),$ which isolates the error made due to the perturbation to the dynamics at time $t.$ Since both $F^0$ and $F_\epsilon^0$ are the identity, we have $r_0(f)$ to be the zero function for all $f.$ Applying the recursive relationship \eqref{eq:recursiveRelationshipForr}, we obtain $r_t(f) = \sum_{i=0}^{t-1}\left( \nabla (f \circ F_{t-1} \circ \cdots \circ F_{i+1}) \circ F_i  \cdot \chi_{i} \right) \circ F^{i}.$ Substituting this explicit expression for the response of $f,$ we obtain the following expression for the statistical response,
$\langle f, \partial_\epsilon|_{\epsilon = 0} \mathcal{L}^{\tau} \rho_0 \rangle = \langle f, u(\rho_\tau)  \rangle = \langle r_\tau(f), \rho_0\rangle = \sum_{i=0}^{\tau-1} \langle \nabla(f \circ F_{T-1} \circ \cdots \circ F_{i+1}) \circ F^{i+1} \cdot \chi_{i} \circ F^{i},  \rho_0\rangle.$
Upon integration by parts and change of variables, we obtain for test functions $f$ that vanish on the boundary $\partial M,$ 
\begin{align}
    u(\rho_\tau) (x) = -\rho_\tau(x) \: \sum_{i=0}^{\tau-1} \left(\mathrm{div}(\chi_i) + \chi_i \cdot s_{i} \right) \circ F_{i}^{-1} \circ \cdots \circ F_{\tau-1}^{-1} (x).
\end{align}
In other words, when $p_\mathrm{data}$ is non-singular and $F_t$ is continously differentiable on $\mathbb{R}^D,$ we obtain that the statistical response is supported only where the target density is non-zero, in the limit $\epsilon \to 0.$ See Appendix \ref{sec:response} for an extension of the above computation to stochastic processes and singular $p_\mathrm{data}.$

\section{A dynamical mechanism for robustness}
\label{sec:robustnessConsistency}
In the previous section, we looked at statistical response to learning errors for non-singular targets. Here, we propose a mechanism for the robustness of the statistical response by looking at infinitesimal perturbations along sample paths (dynamics on the tangent bundle) for both singular and non-singular targets. Fixing a sample path, we deduce sufficient conditions on the stability of the sample path that is consistent with the robustness of support. We first informally explain such a notion of sample path stability. We say that a path initialized with $x_0 \sim \rho_0$ is \emph{robustness-consistent} if at its end point, $x_\tau,$ the least stable Lyapunov vectors are orthogonal to the target score function at that point. To understand the reasoning behind this notion, we provide a brief primer on tangent dynamics. 

\textbf{Dynamics on tangent space.} Fix a sample path $x_t = F_t(x_{t-1}).$ Now, consider applying an infinitesimal perturbation (similar to in section \ref{sec:robustness}) at time $0.$ Then, at time $1,$ $F^1_\epsilon(x_0) = F^1(x_0) + \epsilon dF_0(x_0)\: \chi_0(x_0) + \mathcal{O}(\epsilon^2).$ Hence, after one step, in the limit $\epsilon \to 0,$ we have $(F^1_\epsilon(x_0) - x_1)/\epsilon = dF_0(x_0)\: \chi_0(x_0) := \chi_1(x_1),$ which we may define as a new vector field $\chi_1$ evaluated at the point $x_1$ (a tangent vector at $x_1$). Iterating this definition, we can define a linear dynamical system, $dF^t = dF_{t-1} \circ F^{t-1} \cdots dF_0$ that acts on the tangent bundle and evolves an infinitesimal perturbation along a vector field applied at time 0 to the corresponding vector field at time $t.$ Considering these linear evolutions, we can define the principal directions of deformation (the most sensitive directions of perturbation) by considering the leading  eigenvectors of $(dF^t)(dF^{t\top}),$ which we will call the finite-time Lyapunov vectors. The eigenvalues are generally arranged in decreasing order, with at least one positive eigenvalue indicating expanding directions or diverging infinitesimal perturbations. The time-asymptotic properties of these matrix are well-studied via the classical Furstenberg and Oseledets ergodic theorems (see \cite{arnold1995random} for a textbook exposition). However, we are only concerned with finite-time dynamics here, since GMs considered here are only defined for $0\leq t \leq \tau.$

By definition, the least stable Lyapunov subspace is aligned with the most sensitive directions of (infinitesimal) perturbations in the dynamics. Now, on the other hand, on the boundary of the support of the data (target) distribution, intuitively, the score is orthogonal to the support of the data distribution. Thus, the target score having a small (or zero) component along the least stable Lyapunov subspace indicates that the tangent directions most sensitive to perturbations are along the data manifold. In this section, we give sufficient conditions for the robustness-consistency of sample paths. We will show how this improves our qualitative understanding of why some generative models are robust. Moreover, we will provide a computable criterion for verifying the robustness of a given generative model. 

Before we prove sufficient conditions for robustness-consistency of sample paths, we explain the connection of this notion with the robustness definition from section \ref{sec:robustness}. Throughout, suppose that the target has a compact support in $\mathbb{R}^D$ with a nonempty interior and a boundary in $\mathbb{R}^d.$ It is reasonable to assume that the target score function is orthogonal to the tangent space of the boundary manifold, with high probability with respect to the target distribution. A visually intuitive example is the support being the unit sphere $\mathbb{R}^3,$ which has a boundary $\mathbb{S}^2,$ the surface of the sphere. The score function, we assume, is a vector field that is radially inward at every point. Now, what happens when the tangent plane on $\mathbb{S}^2$ is, with high probability, the least stable subspace? Sample paths are \emph{attracted} to the sphere, since the radially inward direction is most stable. We formalize this intuition in the proposition below for convergent generative models \cite{lee2023convergence}. Before this, we first describe the most sensitive subspaces of the tangent spaces associated with sample paths. 

\textbf{Most sensitive subspaces.} Let $\{x_t\}$ be a fixed sample path (trajectory) of the generative model $F^\tau.$ When we have an exact generative model, if $x_0 \sim \rho_0,$ then $x_\tau \sim p_\mathrm{data}.$ Recall that the set $M \in \mathbb{R}^D$ is the support of $p_\mathrm{data},$ and the effective dimension of the support is $d <= D.$
We define $E^d_0$ to be a randomly chosen  $d$ dimensional subbundle (a $D \times d$ orthogonal matrix at each $x$). We can consider the orthogonal decomposition of $T_{x_0} M$ (which is isomorphic to $\mathbb{R}^D$) to be $T_{x_0} M = E^d_0(x_0) \oplus E^{d\perp}_0(x_0).$ Now, the evolution of these subspaces under the time-dependent Jacobian matrix field, $dF_t,$ gives the tangent dynamics: $dF_t \: E^d_t = E^d_{t+1} \: R_t.$ By construction, since $E^d_{t+1}(x)$ at each $x$ is orthogonal, the above tangent equation is simply a QR decomposition of the $D\times d$ matrix $dF_t \: E^d_t.$ The diagonal elements of $R_t$ represent stretching or contraction factors under the linearization $dF_t$. We refer to $E^d_t$ constructed this way as 
the most sensitive subbundle because as $t\to \infty,$ this converges to the top $d$-dimensional Oseledets subbundle (backward Lyapunov vector bundle). See Appendix \ref{sec:background}. 

\begin{proposition}{[Predicting the support with convergent and aligned generative models.]}
\label{prop:robustness}
For any $\delta > 0$, let $\epsilon_0 > 0$ be such that a \emph{convergent generative model} (see Appendix \ref{sec:convergence}), $F^\tau,$ produces $n$ samples, $\{y_i\}, i \in [n],$ such that with probability $\geq 1 - \delta/2$ over the generated samples, $\|T(y_i) - y_i\| \leq \epsilon_0$  and $T(y_i) \sim p_\mathrm{data}.$
Additionally, let $F^\tau$ be such that the most sensitive $d$-dimensional subspace of the tangent space, $E_\tau^d(x),$ spans $T_x \partial M$ with probability $\geq 1 - \delta/2.$ Then, there exists a binary function $f: \mathbb{R}^D \to \{+1, -1\}$ such that, $\mathbb{E}_{x \sim p_\mathrm{data}} \mathbbm{1}_{f < 0}(x) \leq c n^{-1} \log n + n^{-1} \log(n^2/2\delta),$ where $c=\mathcal{O}( 1/\text{margin})$ with probability $\geq 1 - \delta.$ 
\end{proposition}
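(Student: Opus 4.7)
The plan is to construct $f$ explicitly from the $n$ generated samples and their associated tangent subspaces, then argue that (i) this construction correctly classifies the training samples themselves with margin, (ii) the resulting hypothesis lives in a low-complexity class, and (iii) a standard PAC/margin generalization bound produces exactly the stated rate. Concretely, for each $y_i$ let $\Pi_i$ denote the orthogonal projector onto $E_\tau^d(y_i)$ and define a local half-space/tube functional $g_i(x) = \eta - \norm{(I - \Pi_i)(x - y_i)}$, so that $g_i(x) \geq 0$ exactly when $x$ lies within a tube of half-width $\eta$ of the affine subspace $y_i + E_\tau^d(y_i)$. Then set $f(x) = +1$ iff $\max_i g_i(x) \geq 0$, i.e., $x$ is covered by at least one tangent-plane tube.

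First I would verify that with probability at least $1-\delta$ over the sampling, the events of the two hypotheses hold simultaneously: each $y_i$ is within $\epsilon_0$ of a point $T(y_i) \in M$, and each $E_\tau^d(y_i)$ spans $T_{T(y_i)} \partial M$. A simple union bound on the two probability-$\delta/2$ events gives this. Next, using convergence of the GM and a first-order expansion of the embedding of $\partial M$ in a chart around $T(y_i)$, I would show that $T(y_j)$ lies within a tube of width $O(\epsilon_0)$ around $y_i + E_\tau^d(y_i)$ whenever $T(y_j)$ is within a bounded chart-radius of $T(y_i)$. Choosing $\eta$ a modest multiple of $\epsilon_0$ (absorbed into the constant $c$) guarantees $g_i(T(y_i)) \geq \text{margin}$ where the margin is dictated by the slack between $\eta$ and the actual perpendicular displacement. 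Hence $f(T(y_i)) = +1$ for all $i$, i.e., zero empirical error on the pseudo-sample $\{T(y_i)\}_{i=1}^{n}$, which are iid draws from $p_\mathrm{data}$.

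Next I would bound the complexity of the hypothesis class $\mathcal{F}$ of classifiers of the form $x \mapsto \operatorname{sgn}(\max_{i \in S} g_i(x))$, where $S$ ranges over subsets of indices and each $g_i$ is built from at most two of the observed sample points (the anchor $y_i$ plus, say, an auxiliary point used to fix the orientation of the tube). The effective class has cardinality at most $n^2$ up to the choice of margin parameter $\eta$, producing the $\log(n^2/(2\delta))/n$ contribution. Combining zero empirical error with the PAC/Occam bound for finite classes, and folding the margin dependence into the covering number bound (which scales as $1/\text{margin}$ via Lipschitz-continuity of the functionals $g_i$ and the standard fat-shattering argument), yields the compound rate $c n^{-1}\log n + n^{-1}\log(n^2/(2\delta))$.

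The main obstacle is the second step: relating the alignment of the Lyapunov subspace at $y_i$ to the true tangent space at the nearby manifold point $T(y_i)$, and then to the intrinsic curvature of $\partial M$. Since the alignment hypothesis is stated at $y_i$ but the classification target concerns $p_\mathrm{data}$-typical points that may be $\epsilon_0$-displaced, I must control the perturbation of the subspace under a small base-point shift together with the curvature of $\partial M$ over the chart radius. The cleanest route is to restrict attention to chart radii of order $\epsilon_0$, invoke a $\mathcal{C}^1$-regularity of $\partial M$ and of the $y \mapsto E_\tau^d(y)$ map (which follows from the $\mathcal{C}^1$-regularity of the $F_t$ assumed earlier), and absorb the curvature constant into the margin. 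This is exactly what forces the $1/\text{margin}$ scaling of $c$ in the final bound.
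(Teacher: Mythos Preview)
Your construction is quite different from the paper's, and the gap lies in the complexity/rate step. The paper does \emph{not} build a geometric tube classifier from the Lyapunov subspaces at all. Instead it takes $x_i := T(y_i)$, which are genuine i.i.d.\ draws from $p_{\mathrm{data}}$, trains a one-class kernel SVM $f(x)=\operatorname{sgn}(w\cdot\Phi(x)+b)$ on the $x_i$, and simply \emph{cites} the margin-based one-class generalization bound of Vert (2006)/Sch\"olkopf et~al.\ (2001); that is where the precise form $c\,n^{-1}\log n + n^{-1}\log(n^2/2\delta)$ with $c=\mathcal{O}(1/\text{margin})$ comes from. The alignment hypothesis is used only once, and in a much lighter way than you propose: since $y_i = x_i + \epsilon_0 v_i$ with $v_i\in T_{x_i}\partial M$, the displacement is along the separating surface, so $f(y_i)=f(x_i)$ and the confidence margin of the SVM is unchanged when one swaps the training set $\{x_i\}$ for $\{y_i\}$. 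A union bound over the two $\delta/2$ events finishes the argument.

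Against this, your tube classifier has two problems. First, the hypothesis class you describe is not finite: each $g_i$ depends on a continuous projector $\Pi_i$ and on $\eta$, so the ``cardinality at most $n^2$'' claim is false as stated, and patching it with covering numbers / fat-shattering will not reproduce the exact two-term rate in the proposition (that rate is an artifact of the cited one-class SVM theorem, not a generic PAC/Occam bound). Second, your learner is data-dependent in a nonstandard way: $f$ is built from $(y_i,\Pi_i)$, yet you want to control its risk over $p_{\mathrm{data}}$ using the correlated pseudo-sample $\{T(y_i)\}$; this is not a clean i.i.d.\ setup and would require an additional argument you have not supplied. The paper's route avoids both issues entirely: by training a standard one-class SVM on the $x_i$ it inherits an off-the-shelf bound, and alignment enters only through the margin-preservation observation, with no need for the chart-radius/curvature control you flag as the main obstacle.
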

\begin{proof} (Sketch)  Given $n$ predicted samples, $y_i,$ let $x_i = T(y_i) \sim p_\mathrm{data}$ be samples from $p_\mathrm{data}$ obtained by applying the $L^2$-optimal transport map. Let $f(x) = \mathrm{sgn}(w\cdot \Phi(x) + b)$ be a one-class kernel-based classifier trained on $x_i, i \in [n].$ Then, $f$ satisfies the margin-based generalization bounds (see e.g., Theorem 3 of \cite{vert2006consistency}; \cite{scholkopf2001estimating}) in the statement of the proposition. By assumption, with probability $\geq 1 - \delta/2,$ we have, since the samples $y_i$ are of the form, $y_i = x_i + \epsilon_0 v_i,$ where $v_i \in T_x \partial M,$ the predictions $f(y_i) = f(x_i)$ and therefore the confidence margin of $f$ does not change . Taking union bound, we obtain the result. See Appendix \ref{sec:convergence} for an elaboration. 
\end{proof}

The above lemma therefore says that the alignment with the data manifold of the least stable directions is crucial for robustness of the support, as defined in section \ref{sec:robustness}. We formally define this alignment as follows.

\begin{definition}
    We say that a generative model $F^\tau$ has the alignment property if, $E_\tau^d(x),$ the top $d$-dimensional Lyapunov subspace (spanned by the top $d$ least stable Lyapunov vectors) is orthogonal to the target score, $s_\tau(x),$ at each $x \in \partial M,$ the boundary of the support of $\rho_\tau.$
\end{definition}
First, we provide some intuition for the definition of alignment. We claim that a robust predicted distribution (in the sense of robustness of the support) has a highly anisotropic score near the data manifold (the true support of the target). More precisely, for a robust distribution, the score components along the data manifold are smaller than the components normal to it.
Thus, it is reasonable to conclude that when a robust distribution shows such an anisotropicity along $E^d_\tau \oplus E^{d\perp}_\tau,$ the most sensitive directions $E^d_\tau$ are aligned with (the tangent bundle along) the data manifold. Before we state our main result, which provides a sufficient characterization of generative models that have the alignment property, we define anisotropic derivatives. 

\textbf{Tangent and normal derivatives.} We consider the orthogonal decomposition of the vector field (drift term) $v_t$ on the tangent bundle $T\mathbb{R}^D = E^d_t \oplus E^{d\perp}_t.$ That is, we write $v_t(x) = E_t^d(x) v_{t,d}(x) + E_t^{d\perp}(x) v_{t,d\perp}(x),$ defining the components, $v_{t,d}(x) = E_t^{d\top}(x) v_t(x) \in \mathbb{R}^d$ and $v_{t,d\perp}(x) = 
    E_t^{d\perp\top}(x)  v_t(x) \in \mathbb{R}^{D-d}$ of the vector $v_t(x).$ Let $\nabla_{t,d}$ indicate the differential ``along the subspace $E_t^d,$'', i.e., $\nabla_{t, d} f(x) = [\lim_{\epsilon \to 0} (f(x + \epsilon e_{t,d}^1(x)) - f(x))/\epsilon, \cdots, \lim_{\epsilon \to 0} (f(x + \epsilon e_{t,d}^d(x)) - f(x))/\epsilon],$ where $e_{t,d}^i(x)$ is the $i$th column of $E_t^d.$  
   The differential, $\nabla_{t,d\perp}$ is defined similarly as the directional derivatives along the subspace $E^{d\perp}_t.$ Finally, we also define the contraction factors $\alpha_t := \|R_t\|_\infty$ and $\beta_t := \|dF_t\: E^{d\perp}_t\|_\infty,$ where the $\infty$ norm over the scalar field is the supremum over $x \in M.$ By definition, $1\geq \alpha_t \geq \beta_t,$ at all $t.$ 
\begin{theorem}{[Alignment of generative models.]}
\label{thm:alignment}
Let $M \subset \mathbb{R}^D$ be the compact set where the target distribution $p_\mathrm{data}$ is supported. When the support $M$ of the target is a $d$-dimensional manifold with $d \leq D$, for an exact generative model that is compressive, the alignment property holds under the following conditions: for some $\delta \in (0,1),$ there exists a time $t^* \in [0,\tau]$ such that i) $\prod_{t \leq n\leq \tau} \alpha_n \geq (1 + \delta)^{\tau-t}, t \geq t^*;$  ii) $\|(\mathrm{Id} + \delta t \: \nabla_{t,d} v_{t,d})^{-1} \nabla_{t,d}^2 v_{t,d}\|_\infty \leq \delta^{1/(t-\tau)},$ for all $t \in [t^*, \tau];$ and, iii) the cross derivatives $\|\nabla_{t,d\perp}v_{t,d}\|, \|\nabla_{t,d}v_{t,d\perp}\| \approx 0,$ for $t \in [t^*, \tau].$ Intuitively this means that the generating dynamics may include an expansive phase and the vector field $v_t$ is a uniform ``attracting force'' at the end. 
\end{theorem}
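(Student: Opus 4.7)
The goal is to show that, under conditions (i)--(iii) together with compressiveness, the backward Lyapunov subbundle $E^d_\tau(x)$ lies in the tangent space $T_x M$ at boundary points, and is therefore orthogonal to the normal score $s_\tau(x)$. My plan is to analyze the QR recursion $dF_t\,E^d_t = E^d_{t+1} R_t$ in the block decomposition induced by the splitting $T_x\mathbb{R}^D = T_x M \oplus (T_x M)^\perp$ near $M$, exploit a finite-time spectral gap between $\alpha_t$ and $\beta_t$ supplied by (i) and compressiveness, and use (ii)--(iii) to control the off-block (cross-derivative and Taylor remainder) corrections.

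First I would write $dF_t = \mathrm{Id} + \delta t\,\nabla v_t$ in the basis $(E^d_t, E^{d\perp}_t)$. By condition (iii), the off-diagonal blocks $\delta t\,\nabla_{t,d\perp}v_{t,d}$ and $\delta t\,\nabla_{t,d}v_{t,d\perp}$ are negligible, so $dF_t$ is approximately block-diagonal with tangent block $A_t := \mathrm{Id}_d + \delta t\,\nabla_{t,d} v_{t,d}$ and normal block $B_t := \mathrm{Id}_{D-d} + \delta t\,\nabla_{t,d\perp}v_{t,d\perp}$. In the exact block-diagonal limit the QR recursion preserves the tangent subbundle: if $E^d_{t^*}$ coincides with $T_x M$, then $E^d_t \subset T_x M$ for all $t \in [t^*, \tau]$, with $R_t$ equal to the $R$-factor of $A_t$, and the alignment property is immediate.

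Next I would address an arbitrary starting subspace $E^d_{t^*}$ via a spectral-gap argument. Condition (i), $\prod_{n=t}^{\tau-1}\alpha_n \geq (1+\delta)^{\tau-t}$, gives a uniform expansion rate $\geq 1+\delta$ in the tangent block. Compressiveness ($|\mathrm{det}\,dF_t|\leq 1$) combined with the block-diagonal structure forces the normal block to contract, so $\beta_n \leq (1+\delta)^{-1}\alpha_n$ on average over $[t^*,\tau]$. This yields a genuine spectral gap for the product $dF^{\tau-t^*}$ between its leading $d$-dimensional subspace (the tangent bundle of $M$) and the normal complement, and a finite-time Davis--Kahan type bound for invariant subspaces of products gives $\mathrm{dist}(E^d_\tau(x), T_x M) \leq (1+\delta)^{-(\tau-t^*)}$.

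Condition (ii) is what lets us propagate the block-diagonal approximation through the whole window $[t^*,\tau]$: the quantity $(\mathrm{Id} + \delta t\,\nabla_{t,d} v_{t,d})^{-1} \nabla_{t,d}^2 v_{t,d} = A_t^{-1}\,\nabla_{t,d}^2 v_{t,d}$ bounds the second-order Taylor remainder of $v_t$ relative to its linearization along tangent directions, and the exponent $\delta^{1/(t-\tau)}$ is calibrated so that the accumulated remainders over $\tau - t^*$ iterates contribute only an $\mathcal{O}(\delta)$ perturbation to the QR recursion, preserving the gap argument. Combining the three steps yields $\mathrm{dist}(E^d_\tau(x), T_x M) \leq (1+\delta)^{-(\tau-t^*)} + \mathcal{O}(\delta)$, and since $s_\tau(x) \in (T_x M)^\perp$ at boundary points, the alignment property follows. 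The main obstacle is the last step: controlling how the cross-derivative and Taylor-remainder perturbations compound through the QR orthonormalization across many iterates, since the non-normality of $R_t$ can amplify small off-block terms. The sharp calibration in (ii) together with the expansion-gap from (i) are precisely what prevent these perturbations from washing out the alignment by time $\tau$.
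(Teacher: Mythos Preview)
Your approach is genuinely different from the paper's, and it has a real gap tied to the role of condition (ii).

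The paper does \emph{not} track the subspace $E^d_t$ and compare it to $T_xM$. Instead it tracks the \emph{score component} $s_t\,E^d_t$ directly. Differentiating the log of the change-of-variables formula $\rho_{t+1}\circ F_t = \rho_t/|\det dF_t|$ gives a recursion for the score,
\[
s_{t+1}\circ F_t = \bigl(s_t - w_t\bigr)(dF_t)^{-1},\qquad w_t := \mathrm{tr}\bigl((dF_t)^{-1}\,d^2F_t\bigr),
\]
and projecting onto $E^d_{t+1}$ using the covariance $dF_t\,E^d_t = E^d_{t+1}R_t$ yields
\[
(s_{t+1}E^d_{t+1})\circ F_t = (s_tE^d_t)\,R_t^{-1} - (w_tE^d_t)\,R_t^{-1}.
\]
Condition (i) makes $R_t^{-1}$ contractive on $[t^*,\tau]$; condition (ii) bounds $\|w_tE^d_t\|$, because under (iii) the relevant block of $w_t$ reduces (via the Schur complement) to $\delta t\,\mathrm{tr}\bigl((\mathrm{Id}+\delta t\,\nabla_{t,d}v_{t,d})^{-1}\nabla^2_{t,d}v_{t,d}\bigr)$. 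Summing the recursion from $t=0$ (where $s_0E^d_0\approx 0$ since $E^d_0$ is random) gives $\|s_\tau E^d_\tau\|=\mathcal{O}(\delta t)$, which is exactly the alignment property.

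Your plan breaks down at the role of $\nabla^2_{t,d}v_{t,d}$. The QR iteration $dF_t E^d_t = E^d_{t+1}R_t$ involves only the \emph{first} derivative $dF_t$; there is no Taylor remainder in the subspace evolution for condition (ii) to control. The second derivative enters the paper's argument only because differentiating $\log|\det dF_t|$ in the score recursion produces $d^2F_t$. Without the score recursion you have no natural place to use (ii), and your accounting of it as a ``Taylor remainder relative to the linearization'' does not correspond to any term in the QR step. A second issue is that your block decomposition is taken in the basis $(E^d_t,E^{d\perp}_t)$, which is the object you are trying to locate; in that basis the QR relation makes $dF_t$ upper-triangular essentially by construction, so small off-diagonal blocks there say nothing about proximity of $E^d_t$ to $T_xM$. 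Finally, $T_xM$ is defined only at the terminal time, so a Davis--Kahan comparison of $E^d_t$ to $T_xM$ at intermediate $t$ has no obvious meaning in this setup.
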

\begin{figure}
    \centering
   \includegraphics[width=0.2\textwidth]{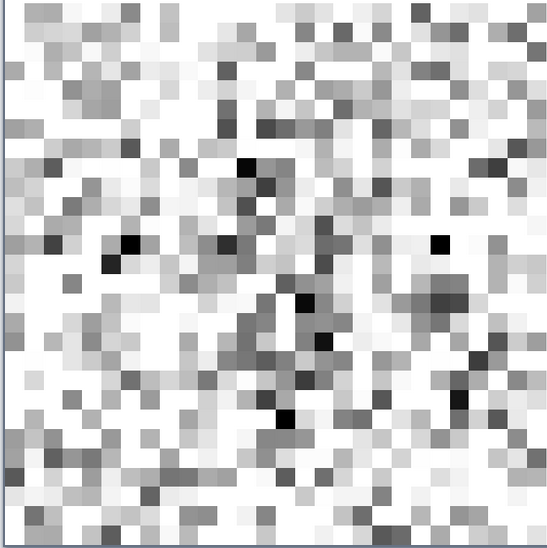}
     \includegraphics[width=0.2\textwidth]{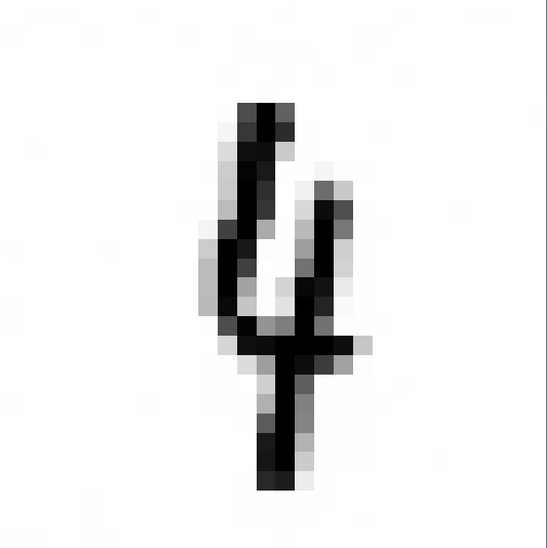}
    \includegraphics[width=0.2\textwidth]{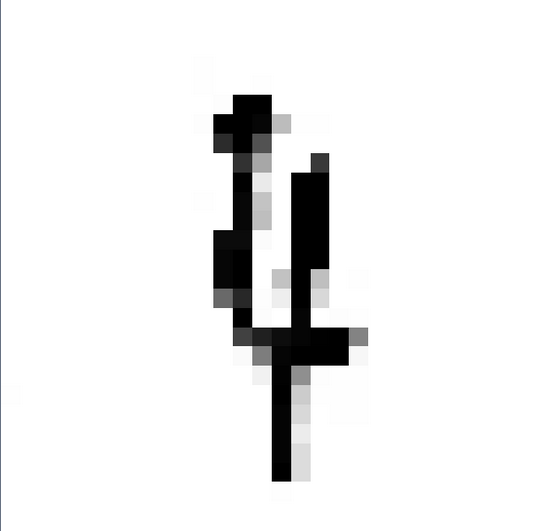}
   \includegraphics[width=0.2\textwidth]{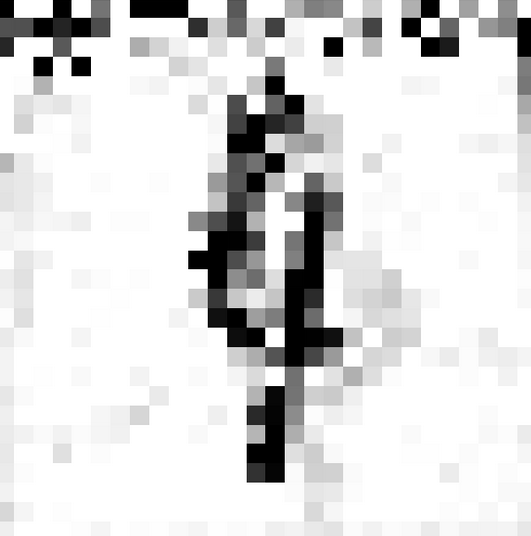}
     \includegraphics[width=0.5\textwidth]{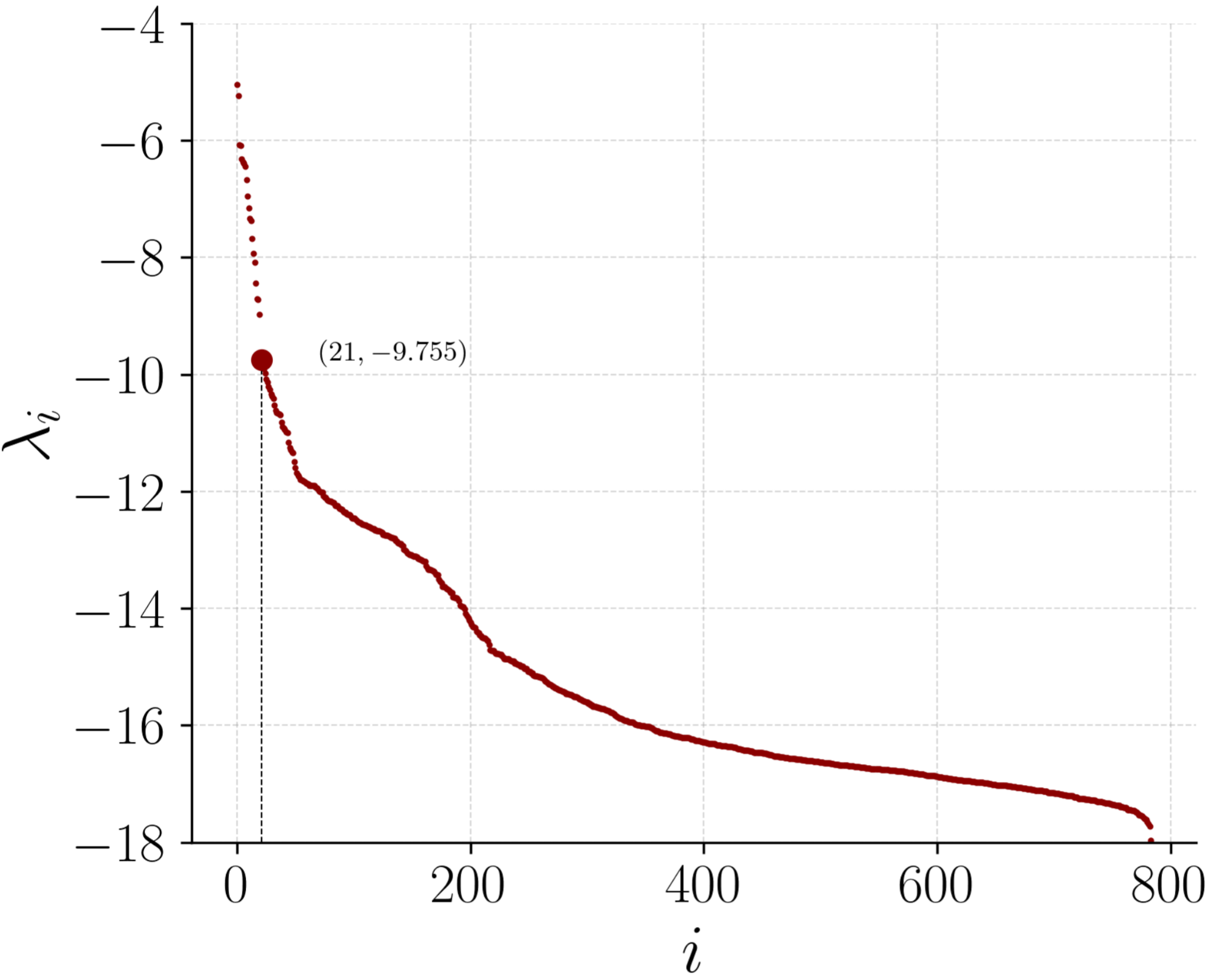}
    \caption{
    Top row (Column 1): noise image from source distribution. (Column 2): 
    an MNIST \cite{deng2012mnist} digit generated by a score-based generative model. See Appendix \ref{sec:additional} for training details. (Column 3): generated image corrupted by the most sensitive Lyapunov vector. See Appendix \ref{sec:background} for details on the computation of these vectors. (Column 4):  generated image corrupted by the 100th most sensitive LV at the same noise level. This shows that moving along very stable LVs, for indices greater than the intrinsic dimension, results in leaving the data manifold. Bottom row: finite time Lyapunov exponents associated with the generating process in an SGM, with a small gap at an index close to the intrinsic dimension, signifying superstability off the data manifold.  
    }
    \label{fig:mnist}
\end{figure}

\begin{proof}
    Recall the covariance of $E^d_t$ in the sense that $dF_t E^d_t = E^d_{t+1} R_t,$ where $R_t$ is an upper triangular matrix in $d$ dimensions. 
    At each time $t,$ for some $\delta_t > 0,$ we choose local coordinates on a $\delta_t$-neighborhood of $x_t$ such that $x_t$
    maps to 0 in $\mathbb{R}^D$ and $E^d_t$ maps to the standard basis of $\mathbb{R}^d$ under the differential. If $M$ is a 
    smooth embedding in $\mathbb{R}^D,$ we may define derivatives of all orders of these local coordinates. We now analyze the behavior of the Jacobian $dF_t$ and the score vector field, $s_t$ in the two distinct phases of the dynamics: for $t < t^*$ and at the end, when $t > t^*$. In local coordinates, we have, $dF_t(x) = \begin{bmatrix}
        \mathrm{Id} + \delta t\: \nabla_{t,d} v_{t,d}(x) & \delta t \:\nabla_{t,d\perp} v_{t,d}(x) \\
        \delta t\: \nabla_{t,d} v_{t,d\perp}(x) & \mathrm{Id} + \delta t \nabla_{t,d\perp} v_{t,d\perp}(x)
    \end{bmatrix}.$  Now, to analyze the behavior of the scores, $s_t,$ consider the change of variables formula for probability densities. Differentiating after taking the logarithm of the  change of variables for probability densities, we have that $s_{t+1}(x_{t+1}) = s_t(x_t) (dF_t(x_t))^{-1} - \mathrm{tr}(dF_t(x_t)^{-1} d^2 F_t(x_t)) (dF_t(x_t))^{-1}.$ We now define the evolution of the score components along $E^t_d:$
    Using the covariance of $E^d_t,$ we get, 
    \begin{align}
    \label{eq:scoreComponentRecursion}
        (s_{t+1} \: E^d_{t+1}) \circ F_t = s_t\: E_t^d \: R_t^{-1} - \mathrm{tr}((dF_t^{-1} d^2F_t) \: E^d_t R_t^{-1}. 
    \end{align}
    Since $E^d$ is covariant, we may interpret  \eqref{eq:scoreComponentRecursion} as an operator acting on score component functions, $s_{t+1} E^d_{t+1},$ ($d$-dimensional row vectors at each $x$) at each time,
    \begin{align}
    \label{eq:Gt}
        \mathcal{G}_t(p)\circ F_t = p\: R_t^{-1} -  w_t \: E^d_t\: R_t^{-1},
    \end{align}
    where, for convenience, we have defined, $w_t(x) = \mathrm{tr}((dF_t(x)^{-1} d^2F_t(x)) \in \mathbb{R}^D.$ At the start, since $E_0^d$ is random, we may assume that $p$ is a zero vector field. Since the dynamics is compressive overall, $\min_{t \leq \tau} \alpha_t < 1,$ and further, $\prod_t \alpha_t < 1.$ However, at small $t,$ the vector field $E^d_0$ is random, and by assumption, $v_t$ is uniformly contractive in all directions. Thus, $d^2 v_t$ has a small norm. Thus, we assume that $\|w_t\| \leq c\: \delta t.$ Thus, in the starting phase we have, $\sum_{t \leq t^*} \|w_{t} E^d_{t} R_{t}^{-1}\cdots R_\tau^{-1}\| \leq c\: t^*\: \delta t\: \prod_{t\leq \tau}\alpha_t^{-1}.$ Then, from (i)-(iii), $\|w_t \: E^d_t\| \leq \delta^{1/(\tau - t)}\: \delta t,$ (see Appendix \ref{sec:alignmentProof}) for any $t \geq t^*,$ and so, $\|w_{t} E^d_{t} R_{t}^{-1}\cdots R_\tau^{-1}\| \leq \delta t\: c_1\: \delta^{1/(\tau - t)}\: (1 + \delta)^{-\tau + t}.$ Now,  
    applying \eqref{eq:Gt} recursively, we obtain that $\|\mathcal{G}^t(0) \circ F^t\| = \|\mathcal{G}^t(0)\| = \|\sum_{t\leq \tau} w_t \: E^d_t \: R_t^{-1}\: R_{t+1}^{-1}\: \cdots R_t^{-1}\| \leq c\: t^*\: \delta t\: \prod_{t\leq \tau}\alpha_t^{-1} +  \delta t\: c_1\: \sum_{t > t^*} \delta^{1/(\tau - t)}\: (1 + \delta)^{-\tau + t} = \mathcal{O}(\delta t).$
\end{proof}
\textbf{Implications for manifold learning.} The above theorem outlines sufficient conditions for the alignment of the most sensitive directions with the data manifold. There are two important consequences of this result. First, for an aligned and convergent generative model, we expect that any small learning errors in $v_t$ will result in probability mass redistributed on approximately the same support. In other words, the predicted density will have the approximately the same support as the target, since the directions orthogonal to the support are superstable (very large finite-time LEs). This is observed in Figure \ref{fig:mnist}, where a generated digit (second column) is perturbed in the direction of the 1st LV (3rd column) to obtain another recognizable digit. On the other hand, when the 100th LV (shown in the 4th column of Figure \ref{fig:mnist}) is added, we obtain artifacts that represent leaving the data manifold. In practice, the Lyapunov vectors may be computed using standard algorithms akin to iterative algorithms for eigenvectors (see \cite{ginelli2007characterizing, benettin1980lyapunov, kuptsov2012theory}). Thus, an aligned generative model can effectively be used to compute the tangent bundle of the data manifold. In other words, an aligned generative model can learn the data manifold with the same sample complexity as the generative model, which is stronger than thought in  previous works  \cite{stanczuk2024diffusion, pidstrigach2022score}, which have only shown that generative models learn the dimension of the manifold.

\textbf{Regularity of alignment.} Secondly, the alignment property itself is robust. That is, when a given generating process has the alignment property, under small perturbations, this property is retained, as we show next. This implies that, even an approximate  generating process can be used for manifold learning. 
\begin{lemma}
    The alignment property is regular.
\end{lemma}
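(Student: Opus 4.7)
The plan is to show that both the top $d$-dimensional Lyapunov subbundle $E_\tau^d$ and the target score $s_\tau$ depend continuously on a small perturbation $\chi_t$ of the generating vector field, and then deduce that the defining orthogonality relation is preserved up to $\mathcal{O}(\epsilon)$ on $\partial M.$ Since alignment is stated as an infinitesimal (zero error) orthogonality condition, showing that the defect vanishes as $\epsilon \to 0$ is exactly what ``regular'' should mean here.

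First, I would quantify the perturbation of $E_\tau^d.$ For $F_{t,\epsilon} = F_t + \epsilon\, \chi_t,$ the Jacobians satisfy $dF_{t,\epsilon} = dF_t + \epsilon\, d\chi_t,$ and by the chain rule along any fixed sample path, $dF_\epsilon^\tau = dF^\tau + \mathcal{O}(\epsilon)$ in operator norm. Under the hypotheses of Theorem \ref{thm:alignment}, the finite-time singular-value spectrum of $dF^\tau$ has a gap between indices $d$ and $d+1$: the top $d$ stretching factors are bounded below by $\prod_{t\geq t^*}\alpha_t \geq (1+\delta)^{\tau-t^*},$ while the normal directions contract at rate $\prod_t \beta_t$ with $\beta_t \ll \alpha_t$ for $t\geq t^*$ (coupled through the smallness of the cross derivatives (iii)). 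Applying Wedin's $\sin\Theta$ theorem to the SVD of $dF^\tau_\epsilon$ then yields the principal-angle bound
\begin{align*}
    \|E_{\tau,\epsilon}^d(x) - E_\tau^d(x)\|_\infty \;=\; \mathcal{O}\!\left(\epsilon / \mathrm{gap}\right),
\end{align*}
uniformly in $x \in M.$

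Second, I would bound the change in the score at points on $\partial M.$ Using the statistical response computation from Section \ref{sec:robustness}, the perturbed density satisfies $\rho_{\tau,\epsilon} = \rho_\tau + \epsilon\, u(\rho_\tau) + \mathcal{O}(\epsilon^2),$ so taking a logarithmic derivative gives $s_{\tau,\epsilon}(x) = s_\tau(x) + \mathcal{O}(\epsilon)$ in a collar neighborhood of $\partial M$ where $\rho_\tau$ is bounded below. Combining the two estimates at any $x \in \partial M,$
\begin{align*}
    E_{\tau,\epsilon}^{d\top}(x)\, s_{\tau,\epsilon}(x) \;=\; E_\tau^{d\top}(x)\, s_\tau(x) + \mathcal{O}(\epsilon) \;=\; 0 + \mathcal{O}(\epsilon),
\end{align*}
where the first equality uses Cauchy--Schwarz with the two continuity bounds and the second uses the alignment hypothesis for $F^\tau.$ Sending $\epsilon \to 0$ gives alignment for the perturbed model.

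The main obstacle will be making the Lyapunov spectral gap rigorous and quantitative. Theorem \ref{thm:alignment} provides expansion on the top subbundle and overall compression, but Wedin-type bounds require an explicit lower bound on the ratio of the $d$-th to the $(d+1)$-th singular value of $dF^\tau,$ which means combining the covariance $dF_t E^d_t = E^d_{t+1} R_t$ with an analogous controlled recursion on the orthogonal complement---the latter is enabled precisely by the cross-derivative smallness (iii). A secondary subtlety is that on $\partial M$ the target is singular when $d < D,$ so the score and the response formula must be interpreted via their action on test functions tangent to $M,$ or one may first pass to a mollified convergent model as in Proposition \ref{prop:robustness} and then take the mollification parameter to zero after $\epsilon.$
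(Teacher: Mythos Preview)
Your approach is correct in spirit but takes a genuinely different route from the paper. The paper's proof is purely qualitative: it fixes a $C^1$-converging sequence $F^\tau_{\epsilon_k}\to F^\tau_0$, argues that the iterated-QR subbundles $E^d_{\tau,\epsilon_k}$ are equicontinuous on the compact $M$ (assuming $C^{1+\alpha}$ regularity of the maps and nondegeneracy of the stretching factors so that the QR step is stable), and then invokes Arzel\`a--Ascoli to extract a uniformly convergent subsequence whose limit inherits alignment. No rates appear, and the score is never perturbed---the paper only tracks $E^d_\tau$ and implicitly treats the target score/tangent bundle of $\partial M$ as fixed.

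By contrast, you aim for a quantitative $\mathcal{O}(\epsilon)$ bound via Wedin's $\sin\Theta$ theorem, importing the spectral gap from the sufficient conditions of Theorem~\ref{thm:alignment}. This buys an explicit perturbation rate but costs you two things. First, you are assuming (i)--(iii) of Theorem~\ref{thm:alignment}, which are \emph{sufficient} for alignment but not part of the lemma's hypotheses; the paper only needs the weaker nondegeneracy of singular values, which is exactly the Wedin gap condition stated directly rather than derived. Second, Wedin applies to the top-$d$ left singular subspace of $dF^\tau$, whereas the paper's $E^d_\tau$ is the iterated-QR output starting from a random $E^d_0$; these agree only asymptotically under a gap, so for finite $\tau$ you would either need to quantify that discrepancy or, more simply, argue continuity of the QR recursion in the Jacobians $dF_t$ (which is what the paper effectively does). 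Your additional step bounding $s_{\tau,\epsilon}-s_\tau$ via the response formula is not needed in the paper's framing and is delicate precisely where you want it---near $\partial M$ the score can blow up when $d<D$---so the mollification workaround you sketch is essential rather than secondary if you keep that step.
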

\begin{proof}
    Let $F^\tau_0$ be a continuously differentiable generating process (i.e., $F^\tau_0 \in C^1(M)$) for which alignment holds. Now, consider a sequence $\epsilon_k \to 0,$ as $k \to \infty,$ and a sequence of $C^1$ generating processes, $F^\tau_{\epsilon_k}$ that converge to $F^\tau_0$ in the $C^1$ norm. Let $E^d_0$ be differentiable (on some open set containing $M$ in $\mathbb{R}^D$). Starting with the same $E^d_0,$ we may define the most sensitive subspaces, $E^d_{\tau, \epsilon_k}$ for each map $F^\tau_{\epsilon_k}$ as per the construction in this section. Then, from the continuity of $dF^\tau,$ the covariance of $E^d_t,$ and the continuity of $R^\tau$ (see Appendix \ref{sec:regularityOfAlignmentProof}), each element of the sequence $E^d_{\tau, \epsilon_k}$ is locally continuous (we need to also assume the degeneracy of the stretching/compression factors, see Appendix \ref{sec:regularityOfAlignmentProof}). Since $M$ is compact, from the Arzela-Ascoli theorem, we can conclude that $E^d_{\tau, \epsilon_k}$ contains a converging subsequence, which retains the alignment property. 
\end{proof}

\textbf{Justification for sufficient conditions.} We now describe the type of dynamics of the generating process that would satisfy the sufficient conditions in the theorem above. In the beginning, in the absence of information about the target score, the vector field $v_t$ could be uniformly compressive, e.g., in an SGM with the source density being nearly a Gaussian \cite{pidstrigach2022score, chen2023probability}. Then, as information about the target is used, the score acquires an anisotropic behavior, and our sufficient conditions imply that correspondingly an anisotropicity must emerge in the vector field as well. Further, this anisotropicity in the vector field must be such that the vector field has small cross-derivatives. That is, while $\nabla_{t,d} v_{t,d}$ and $\nabla_{t,d\perp} v_{t,d\perp}$ can have large norms, the cross-derivatives, $\nabla_{t,d\perp} v_{t,d}$ and $\nabla_{t,d} v_{t,d\perp}$ must be negligible at the end. An SGM typically satisfies this condition, as one can visually see in Figure \ref{fig:pertLVs}. In the leftmost image in Figure \ref{fig:pertLVs}, which represents an intermediate time, the score vector field (which is equivalent to $v_t$ in an SGM) appears to stretch/compress differential volumes (i.e., it is anisotropic), while at the end (second image in Figure \ref{fig:pertLVs}), there is very large compression toward the data manifold (the two moons) and the cross-derivatives (in local coordinates) are evidently small. The large compression toward the data manifold has been described from many perspectives before: \cite{chen2023score, kadkhodaie2024generalizationdiffusionmodelsarises} use Fourier analysis, \cite{pidstrigach2022score} uses stochastic analysis, \cite{stanczuk2024diffusion, NEURIPS2024_7371b5f5} take the statistical learning and uncertainty quantification perspective. Consistent with these results, our sufficient conditions (Theorem \ref{thm:alignment}), when applied to SGMs, imply that the attraction/compression in volumes normal to the data manifold leads to robustness of the support. Notice that under this attraction condition, and (i)-(iii) of Theorem \ref{thm:alignment}, we cannot control the norm of the orthogonal component of the score, $s_t\: E^{d\perp}_t.$ This is due to two self-reinforcing effects: the compression factor matrices along $E^d_t$ have a smaller norm and hence a larger inverse compared to $R_t.$ Secondly, the components of $w_t$ do not become small along $E^{d\perp}_t$ because the vector field may have non-negligible second derivatives, $\nabla_{t,d\perp}^{2} v_{t,d\perp}$ (one can observe this visually for an SGM in the second column of Figure \ref{fig:pertLVs}). 

\begin{remark}[Without the expansion assumption (i)]
If $\min_t \alpha_t > 1,$ notice that $\mathcal{G}_t \circ F_t$ in the proof above is a linear contraction in $\mathbb{R}^d.$ In this case as well, the score component along $E^d_t$ decreases and becomes negligible. However, in practice, generating processes are compressive dynamics, as they acquire more information about the score when $t$ increases. Hence, we allow expansion for some part of the dynamics but assume compression overall. We need not assume any expansion at all, if an observed \emph{phase transition} \cite{achilli2024losingdimensionsgeometricmemorization} occurs, wherein $F_t$ becomes linear for $t > t^*,$ in which case, $w_t$ is the zero vector field.
\end{remark}

\section{Related Work}
\label{sec:relatedWork}
Tremendous progress has been achieved in the past few years to explain the empirical success of generative models. Several works, such as \cite{chen2022sampling, pmlr-v202-chen23q, lee2023convergence, li2024sharpconvergencetheoryprobability, debortoli2023convergencedenoisingdiffusionmodels} have established theoretical guarantees for the convergence of diffusion models under different metrics (such as Wasserstein, reverse KL, total variation), including proving that they achieve minimax rates for learning the target \cite{oko2023diffusion}, without demanding any functional inequalities (log-concavity) of the target. We do not tackle the question of generalization or convergence in this paper; instead, for small perturbations of generative models that indeed converge, we focus on the related but different question of when the predicted support is still close to the support of the target. Since our approach is dynamical, it applies to any other path  on probability space that leads to the target. Hence, our analysis also applies to normalizing flows \cite{onken2021ot, papamakarios2021normalizing} and flow matching variants \cite{lipman2023flow, tong2024improving}, which can be interpreted as easier-to-train models with specified probability paths. 

Our sufficient conditions in Theorem \ref{thm:alignment} are consistent with observations made from several different angles on the generative process. For instance, \cite{NEURIPS2024_69e68611} finds the emergence of linear behavior when the diffusion model starts to generalize, which is consistent with second derivatives of the vector field being small. Other works such as \cite{biroli2024dynamical, achilli2024losingdimensionsgeometricmemorization, zhang2023emergence} study phase transitions in the dynamics or regularization effects \cite{baptista2025memorization} that leads to generalization.

Our work is most closely inspired by analyses and empirical evidence in \cite{pidstrigach2022score, chen2023score}, which suggest the robustness of the support in score-based generative models in the context of the manifold
hypothesis \cite{pope2021the}. Here, we analyze the dynamics of the reverse process in a way that applies even to non-singular distributions. Further, our splitting in the Lyapunov directions suggests that there is more quantitative geometric information in the generating process about the data manifold beyond just the data dimension \cite{stanczuk2024diffusion}. Although we do not focus on the unboundedness of the score (since our analysis applies also to targets with density) \cite{pidstrigach2022score, lu2024mathematicalanalysissingularitiesdiffusion}, our results are consistent with unbounded score components along $E^{d\perp}_\tau,$ normal to the data manifold (see section \ref{sec:robustnessConsistency}). We remark that finite-time Lyapunov analysis has been classically used for perturbation analysis of fluid flows in the geophysical fluids literature \cite{shadden2005definition, haller2000lagrangian, lapeyre2002characterization}. The eigenvectors of the Cauchy-Green deformation tensor, which in our notation is $dF^\tau\: dF^{\tau\top},$ are our Lyapunov vectors; in fluids, these have been used to understand the deformation in the current velocity field due to perturbations/strains in the past. Interestingly, the application of this analysis technique to generative modeling reveals insight into stability to errors in probability flow ODEs.

Finally, even though we do not explicitly discuss class-conditional generation and guided diffusions \cite{yang2023diffusion, ho2022classifier}, our work can potentially guide algorithms for learning projected scores or diffusions on a lower-dimensional latent space \cite{kadkhodaie2024featureguidedscorediffusionsampling, Blattmann_2023_CVPR}. Our results establish a theoretical foundation for such projections of the vector field by uncovering the connection between the dynamics and the directions where accurate learning of the vector field is not necessary.

\section{Numerical Results} 
\label{sec:numerics}
Apart from the two-moons example we show in Figure \ref{fig:pertLVs}, we collect many other two-dimensional examples that help with visualizing the vector field and the sufficient conditions in Theorem \ref{thm:alignment}. In Appendix \ref{sec:cfm}, we give an example of a non-robust generating process. We observe that when using flow-matching \cite{lipman2023flow, tong2024improving}, with a source density being the 8 Gaussian density, we do not have the robustness property (see Appendix \ref{sec:cfm} for more details, including the Lyapunov vector field). Among high-dimensional examples, we consider MNIST digit generation in Figure \ref{fig:mnist}, but further examples are deferred to \ref{sec:additional}. Using a pretrained model from a score-generative model repository \href{https://github.com/yang-song/score_sde_pytorch}{[Github link]}, we find that adding perturbations of even large norm (when compared to the supremum norm over the pixel values) leads to high-quality images with comparable likelihood scores. Thus, consistent with observations in \cite{pidstrigach2022score}, we find that SGMs satisfy the robustness property. Finally, we describe an empirical observation that qualitatively confirms our manifold learning results from section \ref{sec:robustnessConsistency}. For an aligned generative model like the MNIST SGM (see Appendix \ref{sec:mnist} for training details), we find that the Lyapunov exponents also provide geometric insight into the data manifold. A small gap arises in the LEs (Figure \ref{fig:mnist} Bottom) at an index consistent with previous estimates of the effective dimension of the MNIST data distribution \cite{pope2021the}. Beyond index $\mathcal{O}(20),$ the LEs along the more stable directions appear to be continuous, while the top LEs are degenerate, depicting the deformations/perturbations of the most sensitive subspace of the underlying \emph{feature space}.

\section{Conclusion and limitations}
\label{sec:conclusion}
Overall, we study the robustness of the support of the density predicted by a generative model, when the underlying vector field (score/drift) is learned with errors. Our results suggest that the tangent spaces of the support being aligned with the most sensitive Lyapunov subspaces leads to robustness (Proposition \ref{prop:robustness} and Theorem \ref{thm:alignment}). Since the Lyapunov vectors are efficient to compute, aligned generative models can be used for manifold learning. The computation of LVs can also help us quantitatively distinguish between generative models based on their robustness property. 
Our proof techniques involve a novel combination of statistical learning with the finite-time 
perturbation theory of non-autonomous dynamical systems, which could be independently applicable in other settings.
The wider implication of our results is that the theory of dynamical systems (including perturbation theory and the operator theoretic view) can advance our understanding of as well as provide principled algorithmic improvements to generative modeling and more broadly of probability flow dynamics.

\textbf{Limitations.} We only provide a sufficient condition and not a necessary condition for alignment and hence robustness. More extensive experimentation with various different generative models is needed to determine the most common scenario where 
alignment occurs. More advanced and extensive experiments are also needed to understand the prevalence of alignment and therefore robustness in practice. Further, our 
method of detecting tangent spaces of the data manifold hinges on there being alignment and our results do not explicitly reveal insight into memorization.

\newpage
\appendix
\section{Background on the random dynamical systems view of diffusion models}
\label{sec:background}
In the main text, we define the generating process of any generative modeling algorithm as a random dynamical system. In particular, dynamical generative models like normalizing flow, rectified flow, conditional flow matching-variants, stochastic interpolants \cite{papamakarios2021normalizing, lipman2023flow, tong2024improving, liu2023flow} etc can be viewed as nonautonomous (forced in a time-dependent manner) and deterministic dynamical system. On the other hand, diffusion models or score-based generative models \cite{de2021diffusion, yang2023diffusion, debortoli2023convergencedenoisingdiffusionmodels, nichol2021improveddenoisingdiffusionprobabilistic, song2020score} have stochastic generating processes (reverse or denoising process). However, in the main text, we argued that, for the purpose of analyzing the probability flow, we may ignore the noise in the reverse process, and thus, also consider SGMs to be nonautonomous but deterministic systems. Here, we present a more general dynamical systems definition applicable to both deterministic and stochastic nonautnomous systems. These, so-called random dynamical systems have been classically studied as part of dynamical systems theory, while undergoing parallel development in the probability and stochastic analysis communities (see e.g., \cite{arnold1995random} and \cite{kunita1990stochastic} for textbook expositions of random dynamical systems from the dynamical systems/ergodic theory and probabilistic/stochastic analysis perspectives respectively).

The unifying random dynamical systems framework to represent generative models is as follows. Consider an instance of a time-discretized Wiener path, $\Xi = \{\xi_0, \cdots, \xi_{T-1} \},$ where $\xi_i$ are independent standard normal variables. These provide stochastic forcing to the dynamics $,F_t^\Xi,$ at time $t,$ which is now extended with a superscript $\Xi$ to indicate a fixed noise path. For a fixed noise, $\Xi,$ $F^{\tau,\Xi},$ is defined as a composition (time $\tau$-dynamics) as expected, $F^{\tau,\Xi} = F_{\tau-1}^\Xi \circ F^{\tau-1,\Xi},$ with $F^{0,\Xi} = \mathrm{Id},$ for all $\Xi.$

\textbf{Example: score-based diffusion \cite{sohl2015deep, song2020score}} In score-based diffusion models, the generating process is an Ito process of the form: $dX_t = f_t(X_t) \; dt + dW_t,$ where $f_t$ is a deterministic score term that is represented as a neural network, and $W_t$ is a Wiener path/Brownian noise. Given a time-integration scheme for this stochastic process, we can define $F_{t, \Xi}$ as the stochastic flow over a short time. For instance, using Euler-Maruyama time-integration with a fixed timestep, $\delta t$, we have $X_{t+1} = X_t + f_t(X_t) \; \delta t + \sqrt{\delta t}\: \xi_t.$  Then, $F_t^\Xi(x) := x + \delta t f_t(x) + \sqrt{\delta t} \; \xi_t.$ In summary, we view a stochastic generating process as a one-parameter family of random diffeomorphisms, $F_t^\Xi,$ for (almost) every time sampling, $\Xi,$ of the underlying Brownian path. For the existence of this one-parameter family, we refer to classical works on stochastic flows \cite{kunita1990stochastic, kunita2004stochastic}. With this RDS view, the stochastic process (a continuous-state discrete-time Markov chain) has a time-dependent transition kernel that can now be written in terms of $F^\Xi$ as:
\begin{align*}
        P_t(X_{t+1} \in A|X_t = x) = \mathbb{P}(\xi: F_t^\Xi(x) \in A).
\end{align*}
Substituting for $F_t^\Xi(x) = x + \delta t f_t(x) + \sqrt{\delta t}\; \xi_t,$ and using the fact that $\xi_t$ has a normal distribution, we get, $P_t(A|x) = \int_A e^{-\|y - x - \delta t f_t(x) \|^2/(2\delta t)}\; dy$ for this example process. The usual equation for the evolution of the probability measures, say $\mu_t,$ is the Kolmogorov forward equation, which is given by,
\begin{align}
    \label{eq:kolmogorovForward}
    \mu_{t+1}(A) = \int P_t(A|x) d\mu_t(x) =  \int\mathbb{P}(\xi_t: F_t^\Xi(x) \in A)\; d\mu_t(x).
\end{align}
On the right hand side of the above equation, notice the transition kernels written in terms of the RDS. Moreover, beyond $\mu_t,$ we can also define a sequence of \emph{sample-path measures}, $\mu_t^\Xi,$ which are obtained for fixed Brownian paths via pushforwards or the \emph{Frobenius-Perron} operator,  
\begin{align}
    \label{eq:frobeniusPerron}
    \mu_{t+1}^\Xi = F_{t\sharp}^\Xi \mu_t^\Xi := \mu_t^\Xi \circ F_t^{\Xi, -1}.
\end{align}
Since we start the process with $X_0 \sim \mu_0,$ we take $\mu_0^\Xi = \mu_0$ for all paths $\Xi.$ We note that since $\mu_0$ typically has a density (with respect to Lebesgue $\mathbb{R}^d$), say, $\rho_0,$ $\mu_t$ as well as the sample-path measures, $\mu^\Xi_t$ also have densities up to a finite time, even when $F_t^{\Xi}$ is a non-volume preserving diffeomorphism. We denote these densities as $\rho_t$ and $\rho_t^\Xi$ respectively corresponding to $\mu_t$ and $\mu_t^\Xi.$ With the density $\rho_t^\Xi$ defined, we can use the change-of-variables formula in \eqref{eq:frobeniusPerron} to obtain,
\begin{align}
\label{eq:changeOfVariables}
    \rho_{t+1}^\Xi = \mathcal{L}_t^\Xi \rho_t^\Xi := \dfrac{\rho_t^\Xi \circ F_t^{\Xi, -1}}{|\mathrm{det} \nabla F_t^\Xi| \circ F_t^{\Xi, -1}},
\end{align}
where we define $\mathcal{L}_t^\Xi$ to be a time-dependent linear operator that transforms densities through a deterministic system. Combining with the Kolmogorov forward equation in \eqref{eq:kolmogorovForward}, we also have,
\begin{align}
    \rho_{t+1}(y) = \mathbb{E}_\Xi \mathcal{L}_t^\Xi \rho_t^\Xi,
\end{align}
provided $\rho_0 = \rho_0^\Xi,$ where we have used $\mathbb{E}_\Xi$ to denote expectation with respect to the independent standard Gaussian RVs, $\Xi = [\xi_0, \cdots, \xi_{\tau-1}].$

 For a fixed noise $\Xi,$ the deterministic dynamics $F^{\tau,\Xi}$ is a coupling between $\rho_0$ and a noise path-dependent density $\rho_\tau^\Xi,$ i.e., $\mathcal{L}^{\tau,\Xi} \rho_0 = \rho_\tau^\Xi.$ Here, the operator $\mathcal{L}^{\tau,\Xi}$ is called the \emph{Frobenius-Perron} or transfer operator, which describes the evolution of probability densities through the map, $F^{\tau,\Xi}.$ The Frobenius-Perron operator $\mathcal{L}^{\tau,\Xi}$ is also defined as a composition of per-iteration operators, which we denote by $\mathcal{L}_{t}^\Xi,$ so that $\mathcal{L}_t^\Xi \rho_t^\Xi = \rho_{t+1}^\Xi.$ Specifically, we define $\mathcal{L}_t^\Xi \rho = (\rho \: \Delta{\rm vol}_t) \circ F_{t}^{\Xi^{-1}},$ where $\Delta \mathrm{vol}_t (x) = |\mathrm{det}( dF_t^\Xi)|^{-1}$ indicates the change of differential volume under the application of the map $F_t^\Xi.$ Note that, since the noise $\Xi$ is independent of the state, $\Delta \mathrm{vol}_t$ is not a function of $\Xi.$ In the standard stochastic analysis literature, we generally refer to $\mathbb{E}_\Xi \mathcal{L}_{t}^\Xi$ as the \emph{Kolmogorov forward} operator, which is described in \eqref{eq:kolmogorovForward} when $\mu_t$ are absolutely continuous with respect to Lebesgue. By definition, $\mathbb{E}_\Xi \mathcal{L}_t^\Xi \rho_t = \rho_{t+1}.$ Classically, we may write, $\rho_{t+1}(x) = \int_M \kappa_t(x, y)\; \rho_t(y)\; dy,$ where $\kappa_t(x, y)$ is the conditional density of the transition kernel, $P_t(y, dx),$ which represents the conditional density of the state at time $t+1$ being $x$ conditioned on the state at time $t$ being $y.$ This assumes the kernel is absolutely continuous in both arguments, which is typical for diffusion-based models (e.g., the transition probability measure in \eqref{eq:kolmogorovForward} is absolutely continuous). When the target measure, $\mu_\tau = p_\mathrm{data}$ is singular, making $\rho_\tau$ undefined, the probability density $\rho_{\tau-\Delta}$ for a small $\Delta$ approximates a notion of density associated with the target. For simplicity, $\rho_\tau$ in this case should be interpreted as $\rho_{\tau-\Delta},$ which is a convolution of the target measure, $p_\mathrm{data},$ with a Gaussian of variance $\Delta.$

\subsection{Diffusion models}

Our paper treats the reverse process of a diffusion model as a random dynamical system. While we presented this view in the main text and the previous section, here we review the more standard view through SDEs. Diffusion models generate samples from an unknown
{\em target} probability distribution
$\pi \in \mathcal{P}(\mathbb{R}^D)$ from which we only have access to samples. The general setup \cite{song2020score} is to consider a diffusion process, which will be referred to as the
{\em forward process}, that transforms the target into a distribution that is easy to sample from. 
Typically, the forward process is chosen from a class of Ornstein-Uhlenbeck processes
\begin{align} \label{eqn:scaled-ou}
    dX_t = -\beta_t X_t \, dt + \sqrt{2\beta_t} dB_, \quad X_0 \sim \pi.
\end{align}
It is assumed that $\beta_t$ is positive and integrable such that the integral $\int_0^t \beta_s \, ds \to \infty$ as $t \to \infty$.
It follows that \eqref{eqn:scaled-ou} is a time-rescaling of the standard Ornstein-Uhlenbeck process, through the time change of variables $\tau = \int_0^t \beta_s \, ds$ and the marginals $\rho_t$
converge geometrically to the standard multivariate normal distribution $N(0, I_D)\in \mathcal{P}(\mathbb{R}^D)$. Since \eqref{eqn:scaled-ou} has linear drift, it follows that the solutions can be solved analytically, yielding the formula for the marginals in terms of the target
$\rho_t(x) = \mathbb{E}_{X_0 \sim \pi}[\rho_t(x|X_0)]$
with the conditional density given by the kernel
\begin{align}
    \rho_t( \cdot | x_0) = N\left(\exp\left(-\int_0^t \beta_s \, ds\right) x_0; \left(1 -\exp\left(-2\int_0^t \beta_s \, ds\right) \right) I_D\right).
\end{align}
We note here that the above kernel is smooth in the space variable, implying the $C^\infty$ smoothness 
for the marginals for all $t>0.$

The forward process is ergodic, with the marginals converging to the standard normal at rate $\exp\left(-\int_0^t \beta_s \, ds\right).$  After a finite large time $T$ samples are assumed to be approximately normal. 
 Once $T$ is chosen, define the time-reversed process $Y_t:= X_{T-t}, t\in[0,T)$. It it is known (\cite{haussmann1986time}, \cite{anderson1982reverse}) that $Y_t$ is a Markov process and that it is a weak solution to the following stochastic integral 
\begin{align} \label{eqn:scaled-backward}
    dY_t = \beta_{T-t} \left(Y_t + 2\nabla \log \rho_{T-t}(Y_t)\right)dt + \sqrt{2 \beta_{T-t} } dB_t, \quad Y_0 \sim \rho_T.
\end{align}
From the smoothness of the marginals $\rho_t$ --- and hence smoothness of the drift term $y + 2 \nabla \log \rho_t(y)$ --- it follows that \eqref{eqn:scaled-backward} admits a strong solution for times $t<T$.
It follows from weak uniqueness of the backward process \cite{oksendal2003stochastic} that the law of $Y_{T-t}$ coincides with that of $X_t$. Hence, generating trajectories from the reverse process provides a way of sampling from the target distribution as $t \to T$.

The backward process is only defined for times $t<T$. In order to extend the backward process to the full time interval $t \in [0,T]$, one needs the assumption on the initial density that $\nabla \log \rho_0 = \nabla \log \pi$ exists in a weak $L^2$ sense \cite{haussmann1986time}. However, in practice this is almost never satisfied as the target density is typically singular. This implies a singularity in the score $s_t$ that grows as $\frac{1}{\int_0^t \beta_s \, ds}$ as $t \to 0$. To overcome this issue, the backward process is typically only sampled up to time $t = T - \Delta$, which is responsible for the characteristic noise typically present in image models.

\section{Response of the predicted density to learning errors}
\label{sec:response}
In the main paper, we argued that we may ignore the noise term $\xi$ for a fixed path in probability space, and simply consider the deterministic nonautonomous system. Here we show how to extend the perturbation response result in section 3 to random dynamical systems. Using the framework presented in section \ref{sec:background}, we may go through the same derivation as in section 3 pathwise, by replacing $\mathcal{L}_t$ with $\mathcal{L}_t^\Xi.$ Again, the density $\rho_\tau^\Xi,$ is close to the target (on averaging with respect to the noise paths, $\Xi$), but not exactly equal. In case the target density with respect to Lebesgue  does not exist, we can perform integration by parts and treat $\rho_\tau^\Xi$ as a genuine density due to the convolution of the target measure with Gaussians that describes the $\rho_\tau^\Xi$ in the discrete time algorithm (DDPM). 

As before, we consider $f$ to be constant functions on the data manifold that are differentiably extended to $\mathbb{R}^d.$ The pathwise responses derived in this way contain pathwise score functions, $s^\Xi$ which are not the same as the score functions, $s.$ While $\mathbb{E
}\rho^\Xi = \rho,$ we do not get the score by taking expectations of the pathwise scores. In order to compute $s^\Xi$ however, we may recursively apply the log gradient of the change of variables through the map, $F^\Xi,$ i.e., $\mathcal{L}_t^\Xi.$
The above pathwise statistical response, if uniformly bounded over the Wiener paths, due to dominated convergence, allows us to exchange limits, and thus, the overall statistical response can still be computed pathwise via,
\begin{align}
    \label{eq:statisticalResponse}
   \langle f, \partial_\epsilon|_{\epsilon=0}\mathbb{E}_\Xi\; \mathcal{L}_\epsilon^{T, \Xi} \rho_0\rangle =  \langle f, \mathbb{E}_\Xi\; \partial_\epsilon \mathcal{L}^{T, \Xi}_\epsilon|_{\epsilon=0} \rho_0\rangle. 
\end{align}

\section{Tangent dynamics: evolution of infinitesimal perturbations}
\label{eq:tangent}
The primary objective of this work is to study the effect of learning errors on the dynamics. For stochastic generative processes, we can extend the linear perturbation analysis in the main text to each noise realization of an RDS. As before, to model the effect of score learning errors, we consider evolving $F_{t,\Xi}$ with perturbed scores of the form, $s_t + \epsilon \chi_t,$ where $\chi_t$ is a time-dependent vector field that 
indicates the direction of the error in the score. The perturbed dynamics, for a fixed noise path, is represented as, $F^{t,\Xi}_\epsilon = F_{t-1, \epsilon}^\Xi \circ \cdots \circ \cdots F_0^\Xi,$ and correspondingly, the perturbed densities, by $F^{t,\Xi}_{\epsilon\sharp} \rho_0 = \rho_{t, \epsilon}^\Xi,$ leading to the 
perturbed predicted density, $\rho_{\tau,\epsilon},$ when we take an expectation over noise realizations $\Xi.$ We can set $\zeta_t := \partial_\epsilon F^{t,\Xi}_\epsilon$ to represent a time-dependent vector field that gives the perturbation in the state (sample) at time $t$ due to the learning error field. Taking $\epsilon\to 0,$ we can obtain the following recursive relationship for $\zeta_t:$
\begin{align}
    \label{eq:inhomogeneousTangent}
    \zeta_{t+1}\circ F_t^\Xi = dF_t^\Xi\: \zeta_t + \chi_t \circ F_t^\Xi,
\end{align}
simply by applying chain rule. Unrolling this recursion, and since $\zeta_0^\Xi = \partial_\epsilon F^{0,\Xi}_\epsilon = \partial_\epsilon \mathrm{Id}= 0$ identically as a vector field, we obtain,
\begin{align}
    \label{eq:inhomogeneousTangentUnrolled}
    \zeta_{t+1}^\Xi\circ F_t = \sum_{n=0}^t dF_t\: dF_{t-1}\circ F_{t-1}^{-1} \cdots dF_{n+1} \circ F_{n+1}^{-1}\circ \cdots \circ F_{t-1}^{-1}\: \chi_n.
\end{align}
A vector field can be evaluated at a specific point, say $x \in \mathbb{R}^D,$ to give a \emph{tangent vector}, that indicates the direction of infinitesimal change at $x.$ An interpretation of this infinitesimal change when viewed through differentiable scalar fields is the following. If $g: \mathbb{R}^D \to \mathbb{R}$ is a scalar function on the domain, then, at $x,$ a vector field represents one among the possible directions of an infinitesimal change in $g.$ In other words, tangent vector fields can be thought of as (linear) operators which when acting on differentiable functions produce their directional derivatives at each point. As an example, $\zeta_t(x) \in \mathbb{R}^D$ is a tangent vector that can be used to produce the directional derivative of any $g,$ as $dg(x)\cdot \zeta_t(x) := \lim_{\epsilon \to 0} (g(x + \epsilon \zeta_t(x)) - g(x))/\epsilon.$ In this sense, there is a natural interpretation for the sequence of vector fields defined in \eqref{eq:inhomogeneousTangent}. Let us fix an orbit/sample path, $\{x_t = F_t^\Xi(x_{t-1})\}.$ The tangent vectors $\zeta_t(x_t) \in \mathbb{R}^D$ can be applied to a scalar function $g$ to obtain the overall infinitesimal change in $g$ along the sample path due to infinitesimal learning errors. More precisely, 
\begin{align}
    \partial_\epsilon (g\circ F^{t, \Xi})(x_0) = dg(x_t) \cdot \zeta_t(x_t).
\end{align}
Rewriting \eqref{eq:inhomogeneousTangentUnrolled} to make $\zeta_t(x_t)$ explicit along a fixed sample path,
\begin{align}
    \zeta_t(x_t) = \sum_{n=0}^{t-1} dF_{t-1}(x_{t-1}) \cdots dF_{n+1}(x_{n+1})\: \chi_n(x_{n+1}).
\end{align}
Each term in the above sum consists of multiplication by a sequence of matrices. Let us define $A_t := dF_t(x_t) \in \mathbb{R}^{D\times D}$ and the product $A_{n,t} := A_t \: A_{t-1}\cdots A_{n},$ for $0\leq n \leq t-1,$ for the sake of shorter notation.
That is, the perturbation vector $\zeta_t(x_t)$ can now be written as 
\begin{align}
    \label{eq:inhomogeneousTangentShort}
    \zeta_{t+1}(x_{t+1}) := \sum_{n=0}^{t} A_{n+1,t} \: \chi_n(x_{n+1}).
\end{align}
To analyze the effect of infinitesimal errors on infinitely long sample paths, we can let $n \to -\infty$ in the above equation. In this case, the asymptotic behavior of the product of random matrices comes into play. Oseledets theory (see e.g., \cite{arnold1995random}) is a collection of classical results on random matrix products as applied to cocycles defined on dynamical systems. Essentially, assuming that $\mathrm{max}\{0, \log\|A_t\|\}$ is summable for almost all paths, one can define Lyapunov exponents (for each $\Xi$) to be the logarithms of the set of eigenvalues of the matrix, $\lim_{n\to -\infty}(A_{n, t}^\top A_{n,t})^{1/2(t-n)}.$ Corresponding to the Lyapunov exponents (LE), there is a decomposition of the tangent space at each $t$ in the characteristic directions called Oseledets subspaces, i.e., directions in which the perturbation norms grow at an exponential rate corresponding to a given LE. Thus, to analyze the growth/decay of the norms in the time-dependent linear dynamical system \eqref{eq:inhomogeneousTangent}, these characteristic directions form a natural basis. Here, since our dynamical system is defined only over a finite time interval, we consider a computational proxy for the Oseledets spaces, which are described in the main text (section 4). In the remainder of this section, we let $n \to -\infty$ and review Oseledets theorem. 

Ignoring the control or forcing (inhomogeneous) term in \ref{eq:inhomogeneousTangent}, to isolate the time-asymptotic growth/decay on an exponential scale, we can consider the following homogeneous tangent equation,
\begin{align}
\label{eqn:tangent-update}
    \omega_{t+1} = A_t \: \omega_t.
\end{align}
If we are only interested in growth/decay on an exponential (in $t$) scale, finite sums for $n$ close to $t$ in \eqref{eq:inhomogeneousTangentShort} are not significant. Moreover, the vectors $\chi_n(x_{n+1})$ are path-dependent and perturbation-dependent, and they are not fundamental directions characteristic of the dynamics. Thus, by considering a decomposition (as in section 4) of $\chi_t(x_t)$ along Oseledets spaces, we can provide a pessimistic analysis, since a random vector $\chi_t(x_t)$ will, with probability 1, have a non-zero component in the leading Oseledets space at $x_t.$

The homogeneous tangent equation \eqref{eqn:tangent-update} gives the evolution of infinitesimal perturbations in the initial conditions, i.e., $\omega_t := dF^{t,\Xi}\: \omega_0.$. This equation gives the most general evolution of infinitesimal perturbations along a generic sample path $\{x_t\}.$ When $x_t$ is invariant, i.e., a fixed point, $A_t$ is also invariant, and this reduces to linear stability analysis. When $x_t$ is a periodic orbit, the matrices $A_t$ are classically studied with Floquet theory and corresponding exponents. In more generality, the random matrix product $A_{n,t}(x_n): T_{x_n}\RR^D \to T_{x_t}\RR^D,$ known as the {\em tangent propagator} \cite{kuptsov2012theory}, is studied as $n \to -\infty$ under Oseledets multiplicative ergodic theorem. 

When the dynamics $F_t$ is invertible, we consider the limit 
\begin{align*}
    W^-(t)= \lim_{n \to - \infty} \left(A_{n,t}^{-\top} A_{n,t}^{-1} \right)^{1/(2(t-n))}.
\end{align*}
The eigenvectors $\phi_i(t)$ of $W^-(t)$ are called the {\em backward Lyapunov vectors (BLVs)}, and the negative log of the singular values $\lambda_i = -\log \sigma_i$ are called the Lyapunov exponents. Conventionally, the LEs are still deterministic and are defined by taking expectations with respect to the noise paths $\Xi.$ The vectors $\phi_i(t)$ form a basis for the tangent space at $x_t$ are defined for $\mathbb{P}$-a.e. (for almost every noise realization). For an exposition on the ergodic theory for RDS, see \cite{kifer2012ergodic}. When the distribution $\mathbb{P}$ does not depend on time, the Backward Lyapunov vectors can also be defined in a deterministic manner $\mathbb{P}-a.e.$

\section{Robustness of the support upon alignment}
\label{sec:convergence}

Proposition 4.1 shows that with high probability an aligned and convergent generative model can be used to learn the support of the data distribution accurately. First, by convergence, we mean that the generating process enjoys a theoretical convergence result in Wasserstein metric. For instance, we can consider a convergence result from \cite{lee2023convergence} for denoising diffusion probabilistic model (DDPM), a time-discrete diffusion model. For any general target with compact support, as we have assumed, suppose the score is learned with a $L^2$ error $\mathcal{O}(\epsilon),$ uniformly over time $t \leq \tau.$ Then, 
\cite{lee2023convergence} show that the Wasserstein-2 distance between the predicted density, $\rho_{\tau,\epsilon}$ and the target $p_\mathrm{data}$ is $\mathcal{O}(\epsilon^{1/18}).$ Note that, by definition of Wasserstein-2 distance, if $T$ is the optimal transport map between $p_\mathrm{data}$ and $\rho_{\tau, \epsilon},$ then, $E_{x \sim  p_\mathrm{data}} \|T(x) - x\|^2 \leq C \epsilon',$ where $T(x) \sim \rho_{\tau, \epsilon}.$ Now since $\|T(x) - x\|$ is a random variable with a small mean and variance, we can get an $\epsilon_0$ (applying Chebyshev's inequality e.g.) in the statement of Proposition 4.1 given any $\delta > 0,$ such that with probability (over $n$ independent draws from $p_\mathrm{data}$) $\geq 1 - \delta/2,$ we have that $\|T(x_i) - x_i\| \leq \epsilon_0,$ for all $i \leq n.$

Next we examine the alignment property. In Proposition 4.1, we assume alignment with high probability. That is, with probability $\geq 1 - \delta/2$ over independent draws from $p_\mathrm{data},$ alignment holds, i.e., at the generated samples, $T(x_i),$ the most sensitive Lyapunov subspace $E^d$ is tangent to the support of $p_\mathrm{data}.$ In other words, the generated samples $T(x_i) = x_i + \epsilon h_i,$ where $h_i$ is along $T\partial M.$ Now consider a one-classifier trained to predict 1 if a data point is on the support and -1 otherwise. A kernel-based classifier is always realizable for a discrete data distribution \cite{scholkopf2001estimating}. It is a one-class classifier because for all the data points $x_i,$ the output label is 1 and we do not have negative samples.

A key observation is that the confidence margin of a (one-class) hyperplane classifier trained using $x_i$ is the same as that trained using $T(x_i).$ Therefore, we can apply a known generalization result, and going further, even data-dependent upper and lower bounds for classification using the true data distribution to now the predicted distribution, provided the prediction is aligned (margin does not change). This is the essence of the proof. In summary, we pose learning the support as estimating a one-class classifier. Then, we use the fact that the margin does not change when we move data points along the separating hyperplane.  

\section{Alignment proofs}
\label{sec:alignmentProof}

    In the proof of Theorem 4.3, we make assumptions about the dynamics of the vector field $v_t,$ whose time-discretized flow is our dynamics, $F^t.$ Mainly, toward the end, when $t > t^*,$ we assume specific anisotropic behavior of the vector field. It is helpful to think of the anisotropy by considering local coordinates that align the first $d$ coordinates with the most sensitive subspaces, $E^d_t.$ In other words, consider local coordinates, $\Phi_t:\mathbb{R}^D\to\mathbb{R}^D$ around $x_t,$ such that, $\Phi_t(0) = x_t$ and $d\Phi_t(0)$ maps the first $d$ standard basis vectors to $E^d_t.$

    Recall assumptions (i)-(iii) in the statement of Theorem 4.3. Consider the Jacobian matrix at time $t,$ $dF_t(x)$ in block form, $$dF_t(x) = \begin{bmatrix}
        \mathrm{Id} + \delta t\: \nabla_{t,d} v_{t,d}(x) & \delta t \nabla_{t,d\perp} v_{t,d}(x) \\
         \delta t\: \nabla_{t,d} v_{t,d\perp}(x) & \mathrm{Id} + \delta t \nabla_{t,d\perp} v_{t,d\perp}(x) 
    \end{bmatrix},$$ and the second derivative $d^2 F_t$ can be written as two block tensors:
    $$\begin{bmatrix}
        \delta t \: \nabla_{t,dd}^2 v_{t,d}(x)  & \delta t \: \nabla_{t,dd\perp}^2 v_{t,d}(x) \\
        \delta t\: \nabla_{t,dd}^2 v_{t,d\perp}(x) & \delta t\: \nabla^2_{t,dd\perp} v_{t,d\perp}(x)
    \end{bmatrix}$$ and 
    $$\begin{bmatrix}
        \delta t \: \nabla_{t,dd\perp}^2 v_{t,d}(x)  & \delta t \: \nabla_{t,d\perp d\perp}^2 v_{t,d}(x) \\
        \delta t\: \nabla_{t,d d\perp}^2 v_{t,d\perp}(x) & \delta t\: \nabla^2_{t,dd\perp} v_{t,d\perp}(x)
    \end{bmatrix}.$$ 
    To obtain an estimate of $w_t := \mathrm{tr}((dF_t)^{-1}\: d^2F_t),$ we first observe that using assumptions (ii)-(iii), the Schur complement of the first $d$x$d$ block of $dF_t$ reduces to $\mathrm{Id} + \delta t\: \nabla_{t,d\perp} v_{t,d\perp}.$ Using this Schur complement and assumption iii, we obtain that the first block of $w_t,$ which is $w_t E^d_t$ is given by $\delta t\: \mathrm{tr}((\mathrm{Id} + \delta t\: \nabla_{t,d} v_{t,d})^{-1}\: \nabla^2_{t,dd} v_{t,d}).$ Then, using assumption ii, we obtain the estimate in the main text.
     
\section{Regularity of alignment}
\label{sec:regularityOfAlignmentProof}
Lemma 4.4 shows a notion of regularity of the alignment property. We show this by using the Arzela-Ascoli theorem on the space of functions $E^d_{\epsilon},$ for some $\epsilon$ perturbation of the dynamics. Applying Arzela-Ascoli gives the existence of a converging subsequence on this space. This subsequence consists of most sensitive subspaces of perturbed systems, which from convergence, will also be closely aligned with the data manifold if the original dynamics is aligned. To apply Arzela-Ascoli, one sufficient condition is to assume $F_{t,\epsilon} \in C^{1+\alpha}$ since we then obtain that $E^d_\epsilon$ is Holder continuous. This is because $E^d_\epsilon$ is by construction an orthonormal basis for the column space of $dF^t_\epsilon,$ which is $C^\alpha.$ For the Holder continuity of $E^d_\epsilon,$ we also need the eigenvalues of $dF^t_\epsilon$ to be nondegenerate. With uniform Holder constants and exponents, since $M$ is compact, we get the needed equicontinuity.

\section{Additional numerical experiments}
\label{sec:additional}
Our numerical results using score-based diffusions indicate robustness of support in all cases; further, they also show alignment, qualitatively validating the dynamical mechanism for robustness that we show in the main text. We report on the numerical methods, implementation details and   our experiments in this section. The supplementary material also contains the code needed to reproduce the figures in the main text.

\subsection{Sampling via reverse diffusion}

In the case of score-based diffusions, our dynamics $F^\tau$ refers to the Euler-Maruyama discretization of the reverse diffusion \eqref{eqn:scaled-backward}. There are various noise schedules $\beta_t$ used in practice. In terms of the continuous time SDE \eqref{eqn:scaled-ou}, choosing $\beta_t$ is tantamount to reparameterizing the time variable in the standard Ornstein-Uhlenbeck process via $\tau = \int_0^t \beta_s \, ds.$ From a mathematical perspective, the density evolutions are therefore the same. Practically, however, the process has to be discretized and some noise schedules are more robust against time-discretization errors \cite{higham2002strong}. For the purpose of this study, we therefore fix the noise schedule to be the cosine noise schedule from \cite{nichol2021improveddenoisingdiffusionprobabilistic} that was shown empirically to yield good FID and NLL scores. We observe that our experimental results on alignment and robustness do not change when using different noise schedules. 
The cosine noise schedule from \cite{nichol2021improveddenoisingdiffusionprobabilistic} translates to the formula for $\beta_t$ given by 
\begin{align*}
    \beta_t = \frac{\pi}{(1+\delta)} \cdot\frac{\sin\left( \frac{\pi}{2} \cdot \frac{t + \delta}{1+\delta}\right)}{\cos\left( \frac{\pi}{2}\cdot \frac{t + \delta}{1+\delta}\right)}.
\end{align*}
This comes from the formula for $\overline{\alpha}_t=f(t)/f(0), f(t) = \cos\left( \frac{t + \delta}{1 + \delta} \cdot \frac{\pi}{2}\right)^2 $ given in \cite{nichol2021improveddenoisingdiffusionprobabilistic} and noting that 
    $\overline{\alpha}_t = \exp\left(- \int_0^t \beta_s\, ds \right)$.

Once a suitable approximation to the score is acquired, the backward equation \eqref{eqn:scaled-backward} is discretized to yield the random dynamical system
\begin{align*}
    Y_{n+1} =F_n(Y_n, \xi_n) := Y_n + \beta_{T-t_n}\left( Y_n + 2s_{T-t_n}(Y_n) \right)\delta t + \xi_n \sqrt{2 \beta_{T-t_n}\delta t}, \quad Y_0 \sim N(0,1).
\end{align*}
We also study solutions the {\em perturbed} system
\begin{align*}
    Y_{n+1} =F_n(Y_n, \xi_n) := Y_n + \beta_{T-t_n}\left( Y_n + 2s_{T-t_n}(Y_n) + \epsilon \chi_{T-t_n}(Y_n)\right)\delta t + \xi_n \sqrt{2 \beta_{T-t_n}\delta t}.
\end{align*}
The perturbation vector $\chi_n$ specifying the error between the original dynamical system $F_n( \cdot, \xi_n) = F_n( \cdot, \xi_n; 0)$ and the perturbed dynamical system $F_n( \cdot, \xi_n; \epsilon)$, and $\epsilon$ measures the strength of the perturbation (see \ref{sec:background}). The timesteps $t_n$ is chosen equispaced with $0<t_0< \cdots t_n = T - \Delta = 1 - \Delta$, with $\Delta$ controlling the early stopping time to avoid singularities. This corresponds to solving the backward SDE \eqref{eqn:scaled-backward} from $ t = T - t_0$ backward to $ t = \Delta$.

\subsection{Two dimensional examples}
\begin{figure}
    \centering
    \includegraphics[width=0.3\linewidth]{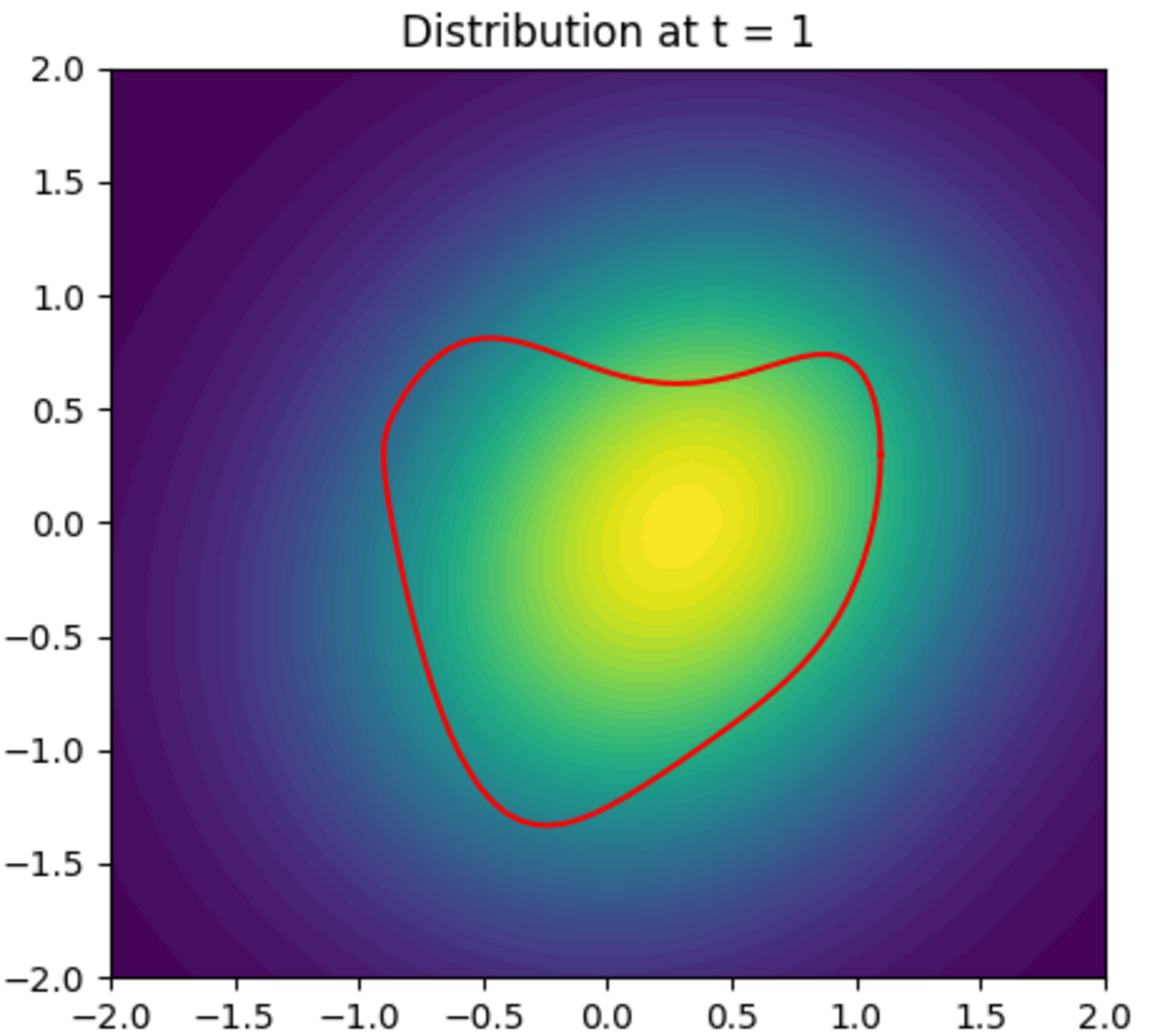}
    \includegraphics[width=0.3\textwidth]{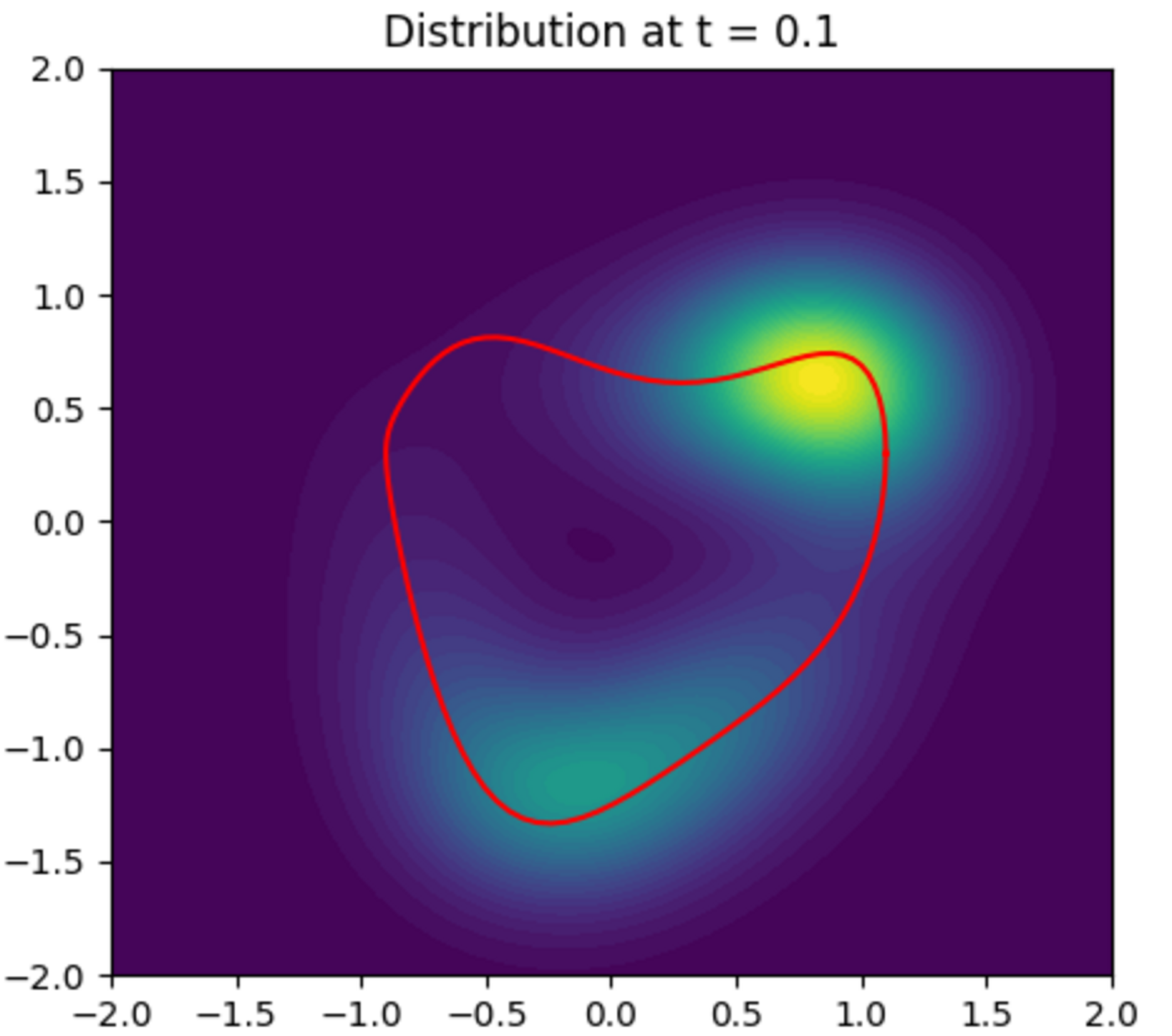}

    \includegraphics[width=0.3\textwidth]{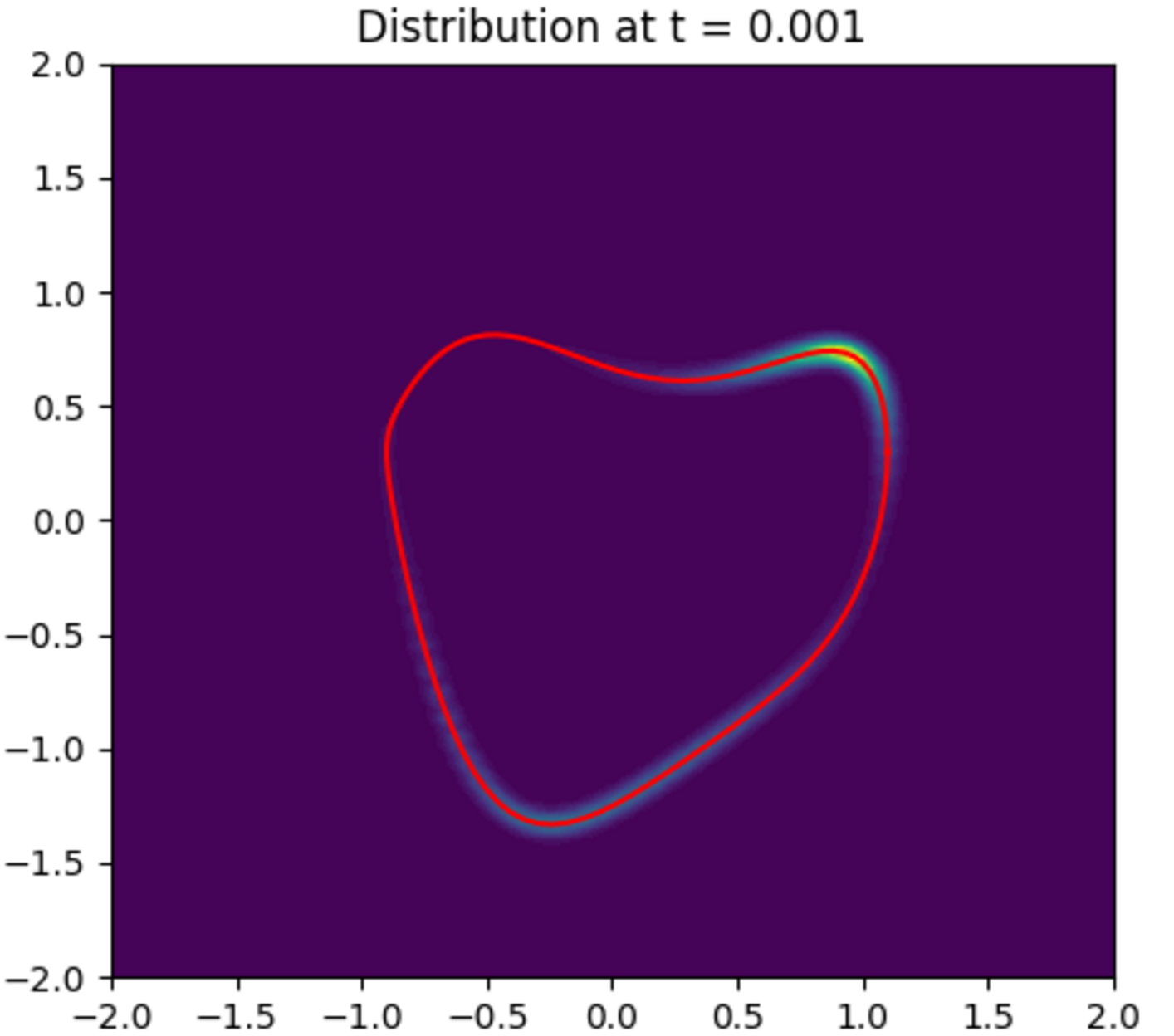}

    \caption{Score-based diffusion with numerical estimates of the score. Top row: starting density, $\rho_0$ and the density at time 0.9. Bottom row: predicted target, $\rho_{1 - \Delta}.$ In each figure, the red curve represents the analytical data manifold.}
    \label{fig:twoD}
\end{figure}
We perform a number of experiments on two-dimensional domains with one or two-dimensional support of the target. We show our experiments with the 2 moons distribution in Figures 1(main paper), \ref{fig:twomoons} and \ref{fig:cfm}. We also visualize the Lyapunov vectors on a different example in Figures \ref{fig:twoD} and \ref{fig:twoDLV}. Throughout, LVs and LEs are computed using the QR algorithm (a finite-time version of the Gram-Schmidt process from \cite{ginelli2007characterizing}) described in section 4.

In these planar experiments, we represent the manifold as a curve (or a collection of curves as in the half-moon example) 
$ \Gamma = \{ \Gamma(t): t \in [0,1] \} \subset \RR^2.$ The target measure is given by 
$dp_{\mathrm{data}} = q\: d\gamma$
where $d\gamma$ is the arc-length measure for the curve $\Gamma(t)$, and $q$ some smooth density. We can compute expectations against $\pi$ via the parameterization as 
\begin{align*}
\EE[g(X)] = \int_\Gamma g(x) p(x) ds = \int_0^1 g(\Gamma(t)) p(\Gamma(t)) \Gamma'(t) \, dt.
\end{align*}
\begin{figure}
    \centering
    \includegraphics[width=0.45\linewidth]{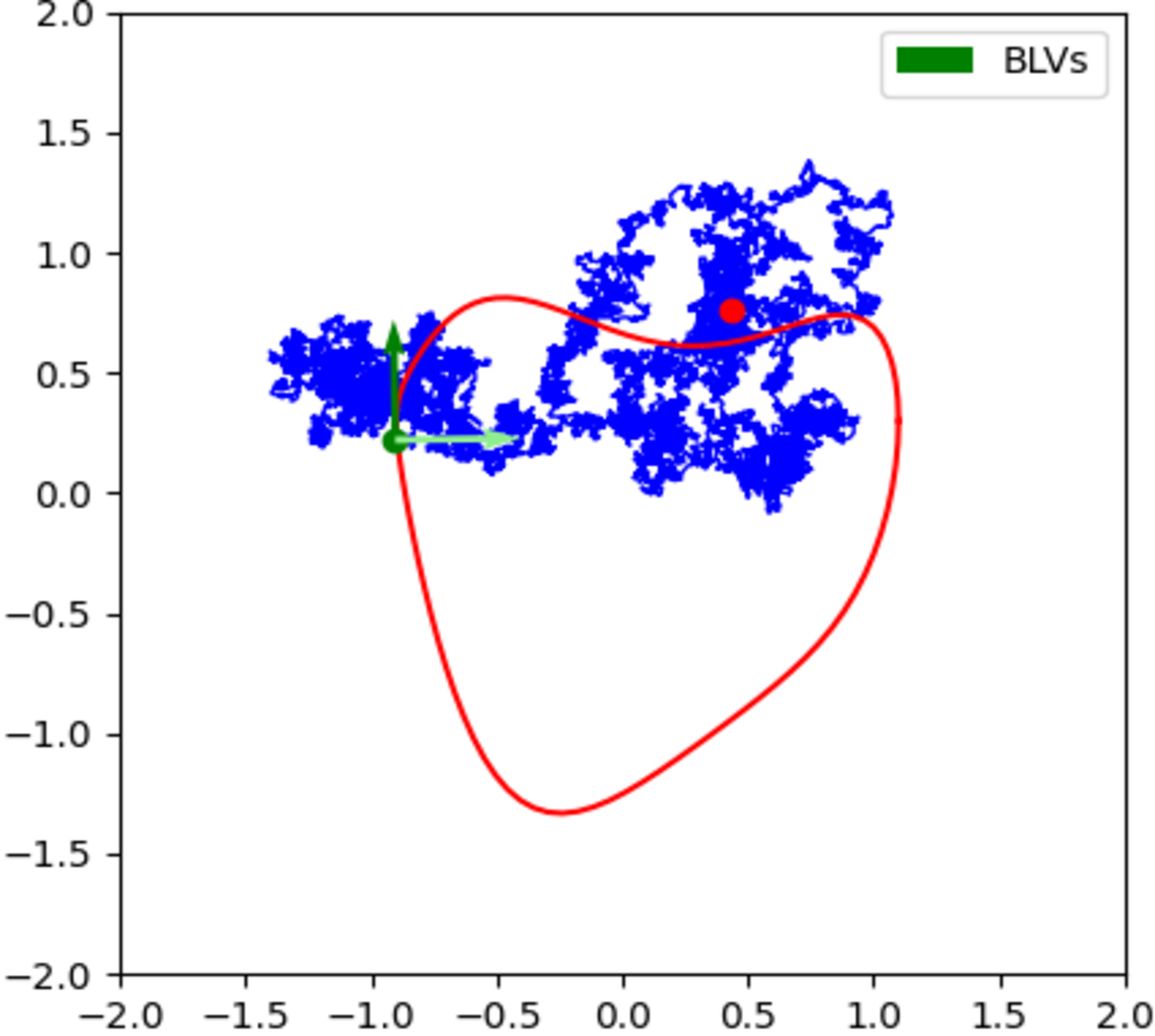}
    \includegraphics[width=0.4\linewidth]{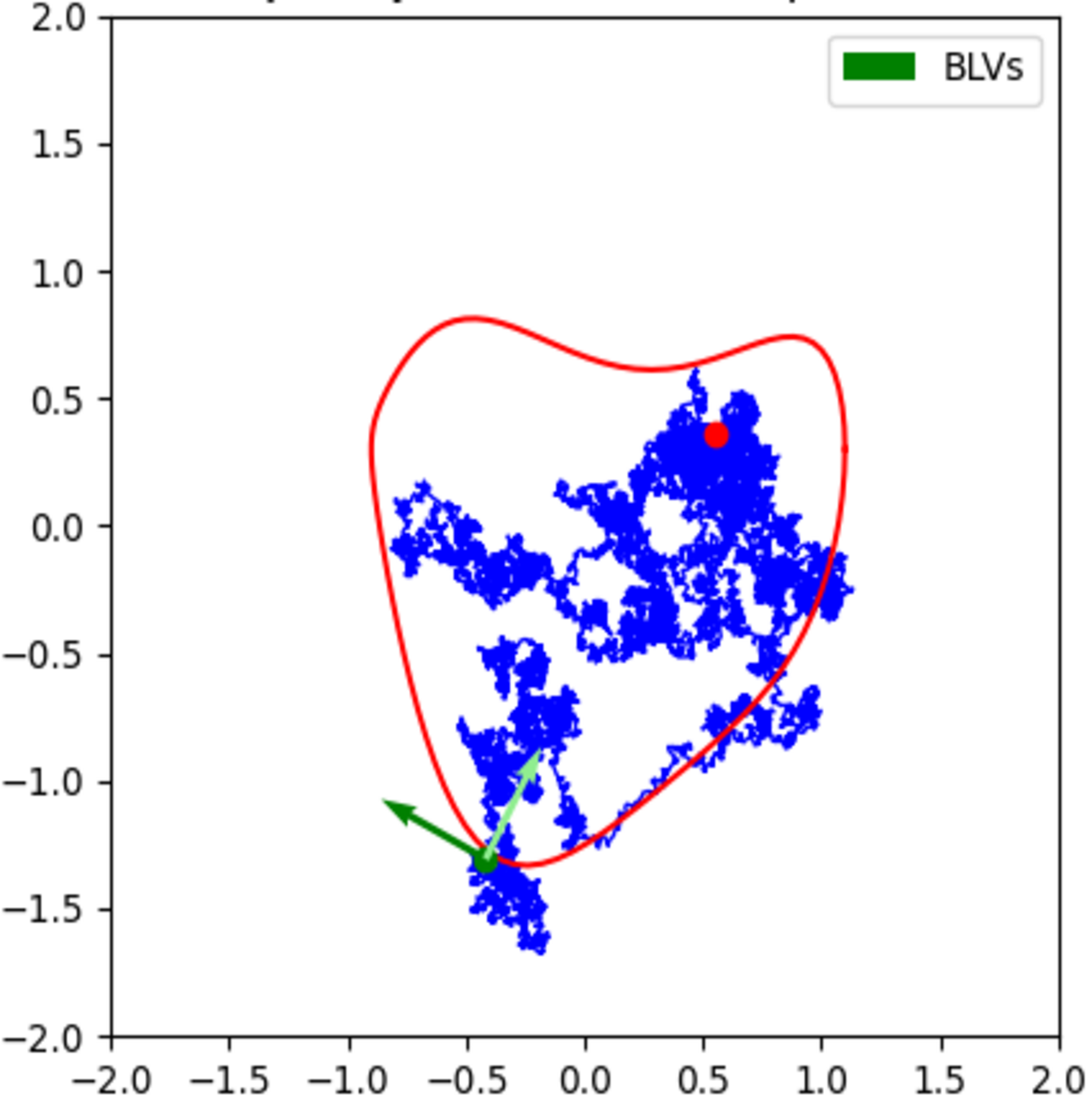}
    
    \caption{The finite-time Lyapunov vectors shown in green at two samples of the predicted distribution. The reverse sample path is shown in blue. The leading LV, shown in a darker green, shows alignment with the data manifold.}
    \label{fig:twoDLV}
\end{figure}
The Ornstein-Uhlenbeck process \ref{eqn:scaled-ou} is a linear SDE with additive noise. The density $\rho_t$ can therefore be solved analytically \cite{oksendal2003stochastic} via the integral equation
\begin{align*}
    \rho_t(x) = \int_\Gamma \rho_t(x | x_0) q(x_0) ds.
\end{align*}
The kernel $\rho_t(x | x_0)$ is the Green's function to the associated Fokker-Planck equation and is given by
\begin{align*}
\rho_t(x | x_0)= \frac{1}{Z_t} \exp \left( - \frac{| x- e^{-\frac{t}{2}}x_0|^2}{2(1-e^{-t})} \right).    
\end{align*}
The score $s_t = \nabla \log \rho_t$ can also be solved for in terms of the one dimensional integral
\begin{align*}
    s_t(x) = \frac{\int_\Gamma \nabla_x \rho_t(x | x_0) q(x_0) ds}{\int_\Gamma \rho_t(x | x_0) q(x_0) ds}.
\end{align*}

Our paper focuses on the propagation of score errors through the dynamics. To validate our theoretical results on the robustness of the support in a stylized setting, and since the integrals involved are tractable in the low-dimensional setting, we estimate the score via quadrature rather than training a neural network. This is done to maintain explicit control of the errors involved in our motivating examples and experiments.

\subsection{MNIST training details}
\label{sec:mnist}
Here we present additional details on the MNIST results from the main paper. We showed that MNIST generation with diffusion models tends to have robustness of the support. Further, we also observed that our proposed mechanism of alignment holds even in this higher dimensional setting. Specifically, we showed that the leading $\mathcal{O}(20)$ (approximately the intrinsic dimension of the support/data manifold) LVs span the tangent spaces to the data manifold. As empirical proof of this, we saw that moving along an LV of a higher index (indices are, by convention, in decreasing order of LEs) takes us off the data manifold. This is shown in more detail in Figure \ref{fig:mnist-appx}.

These images are produced by a DDPM where the score model is trained by minimizing the simplified conditioned score matching loss from \cite{ho2020denoisingdiffusionprobabilisticmodels}:
\begin{equation*}
\mathcal{L}_{\text{simple}}(\theta) = \mathbb{E}_{t,x_0,\epsilon} \left[\|\epsilon - \epsilon_\theta (\sqrt{\alpha_t}x_0 + \sqrt{1 - \alpha_t}\epsilon, t)\|^2\right],
\end{equation*}
where $t \sim U (\sigma_{\min}, T)$, $x_0 \sim p_{\text{data}}$ and $\epsilon \sim \mathcal{N} (0, I)$. Once again we note that in the continuous setting we have $\alpha_t = \exp\left(-\int_0^t \beta_s ds\right)$. Training is done in batches of 64 for 30 epochs. The backward process consists of 4000 steps, generating a trajectory of (11) from time $T = 0.9$ down to $\Delta = T /4000$. We use an Adam optimizer with learning rate 2e-5.

Once trained, the score approximation is given by $s_\theta (x, t) = \frac{1}{\sqrt{1-\alpha_t}} \epsilon_\theta (x, t)$. The neural network $\epsilon_\theta$ is a U-Net, that was implemented in PyTorch by \cite{wang_denoising_diffusion_pytorch}. The U-Net consists of two down-sampling stages, one mid-level stage, and two up-sampling stages, where the $28 \times 28$ image is down-sampled to an array of $7 \times 7$ images and up-sampled again. Each downsampling stage consists of two ResNet layers with SiLU nonlinearity and an Attention layer. The mid-level consists of a ResNet layer followed by an Attention layer followed again by a ResNet layer, before up-sampling in a symmetric fashion.

\begin{figure}
    \centering
   \includegraphics[width=\textwidth]{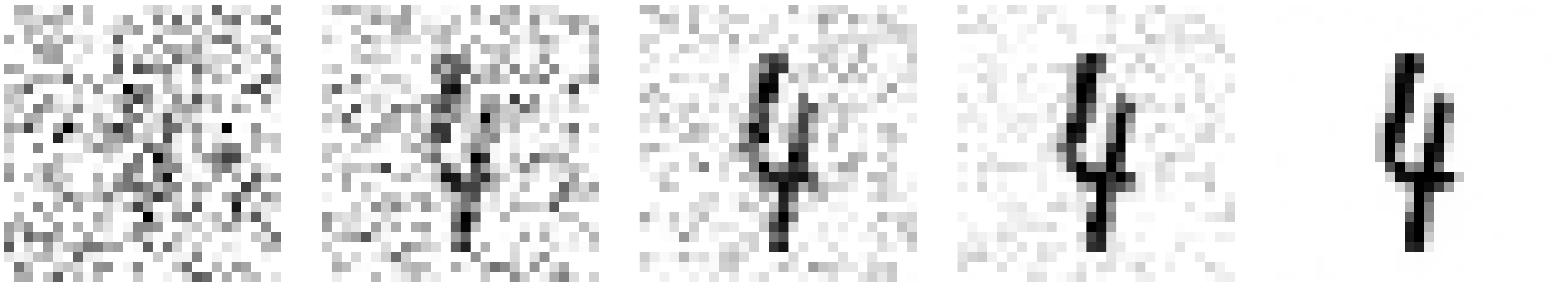}%
   \vspace{1em}
    \includegraphics[width=\textwidth]{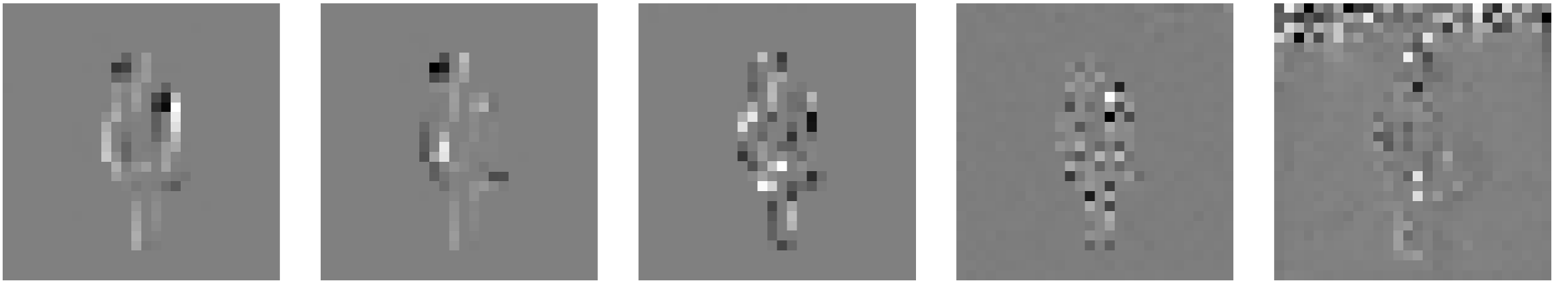}%
    \vspace{1em}
    \includegraphics[width=\textwidth]{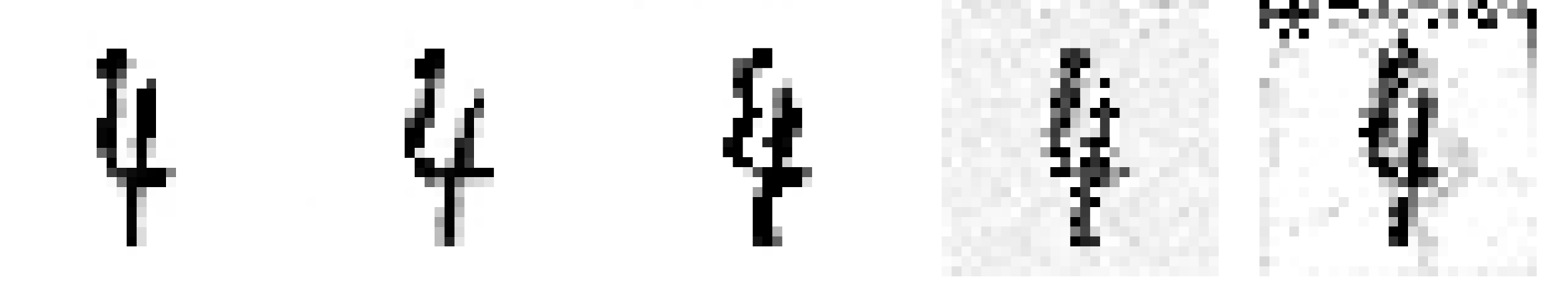}%
    \caption{
    Top: Denoising diffusion trajectory sampled by approximating the score using a U-Net architecture trained on the MNIST digit dataset. 
    Middle: The Lyapunov vectors of indices 1, 2, 20, 50 and 100 (from left to right) calculated along the sample trajectory. Notice that the principal Lyapunov vectors recover meaningful features of the sampled digit. The is in contrast to the lower Lyapunov vectors (higher indices indicate smaller LEs), which become progressively more noisy.
    Bottom: The sample image perturbed in the direction of the Lyapunov vectors in the same column. The Lyapunov vectors represent the principal directions in which errors in the sampling algorithm influence the sampled image. Notice that the principle Lyapunov vectors morph the shape of the sample without destroying image fidelity, whereas the lower Lyapunov vectors destroy image structure. This is consistent with our claim that errors propagate the image tangent to the data manifold.
     }
     \label{fig:mnist-appx}
\end{figure}

\subsection{CIFAR-10}
\label{sec:cifar10}
Our perturbation experiments on the CIFAR-10 data distribution also confirm the robustness of the support property exhibited by score-based diffusion models. In Figure \ref{fig:cifar10} (left), we show images sampled by a pretrained generative model from \cite{song2021scorebased} \href{https://github.com/yang-song/score_sde_pytorch?tab=readme-ov-file}{Github}.
On the right hand side of Figure \ref{fig:cifar10}, we show images generated by the same model with a score perturbation of size 0.1 ($L_\infty$ norm) added to $F_t$ for each $t.$ These samples 
     look visually no different and produce similar likelihood scores, $\approx 3.7$ bits/dim, compared to the predicted samples using the original pretrained score model even for perturbation size up to 1. As expected, this behavior of robustness of the support is reproduced with any stable time-integration scheme, e.g., using predictor-corrector or probability flow ODEs.
\begin{figure}
    \centering
   \includegraphics[width=0.2\textwidth]{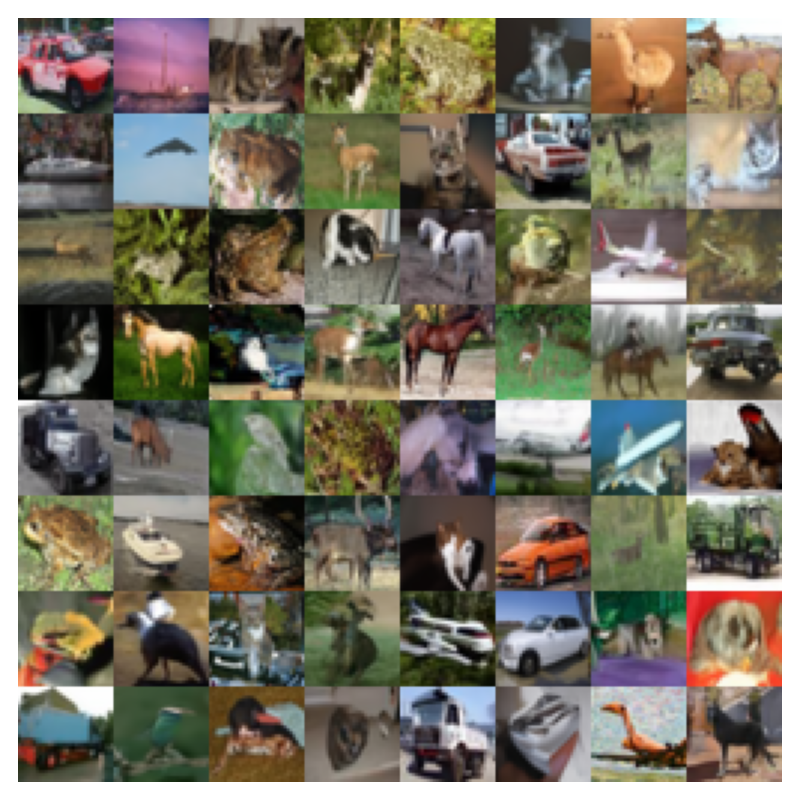}
    \includegraphics[width=0.2\textwidth]{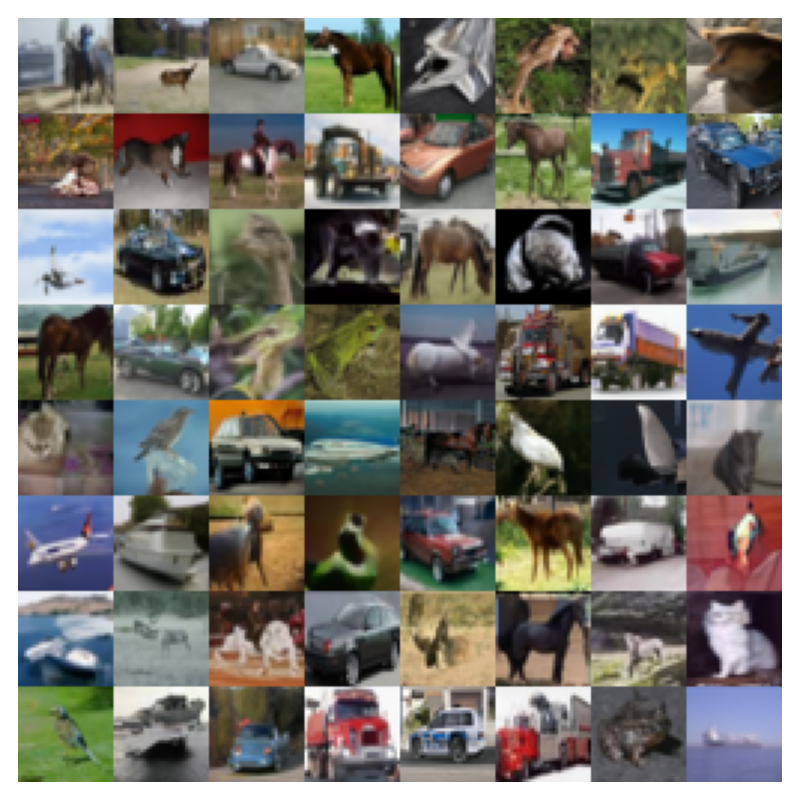}   
    \caption{
    Left: images predicted by a pre-trained score-generative model by Song et al 2021 \href{https://github.com/yang-song/score_sde_pytorch}{[Github link]} on CIFAR-10 training images. Right: Predicted images by the model when a size 0.1 perturbation is added to the score vector field. }
    \label{fig:cifar10}
\end{figure}
\subsection{Conditional flow matching}
 \label{sec:cfm}
 So far, all our numerical experiments were carried out with diffusion models. Here we compare the robustness of the support across other conceptually different generative models. Specifically, we consider experiments with conditional flow matching variants \cite{liu2023flow, lipman2023flow} and stochastic interpolants \cite{albergo2023stochastic}, and all our experiments are based on the implementation by the \verb+TorchCFM+ package \href{https://github.com/atong01/conditional-flow-matching/tree/main}{Github}. At their core, these dynamical generative models interpolate samples from a source density $\rho_0$ and samples from the target $p_\mathrm{data}.$ For instance, a variance-preserving interpolation is used in stochastic interpolants \cite{albergo2023stochastic} and a straight line interpolation is proposed in rectified flow sampling \cite{liu2023flow}. 
In these generative models, a stochastic path such as $X_t = (1-t) X_0 + t X_1 + \sigma(t) \xi_t,$ with $X_0 \sim \rho_0$, $X_1 \sim p_\mathrm{data}$ is predetermined, while the probability flow path is computed. This is in contrast to SGMs, where the probability path is predetermined for the reverse process by choice of the forward process. The score approximation is performed for Brownian/OU paths in SGMs, while for other paths in flow matching. Thus, it is natural to ask if the learned dynamics for these different probability paths also possess the robustness property. 
\begin{figure}
    \centering
    \includegraphics[width=0.3\linewidth]{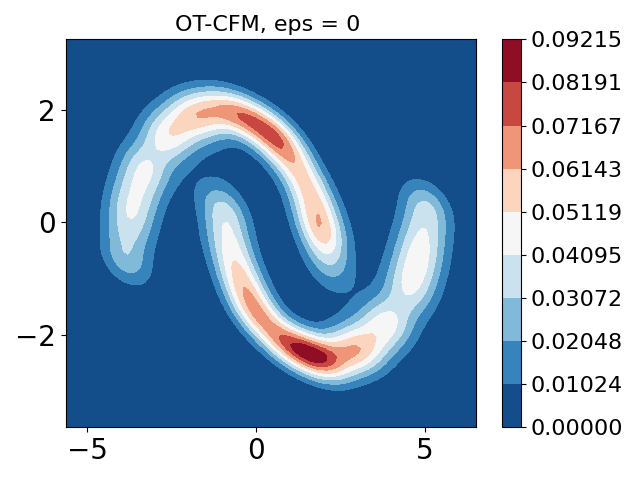}
    \includegraphics[width=0.3\textwidth]{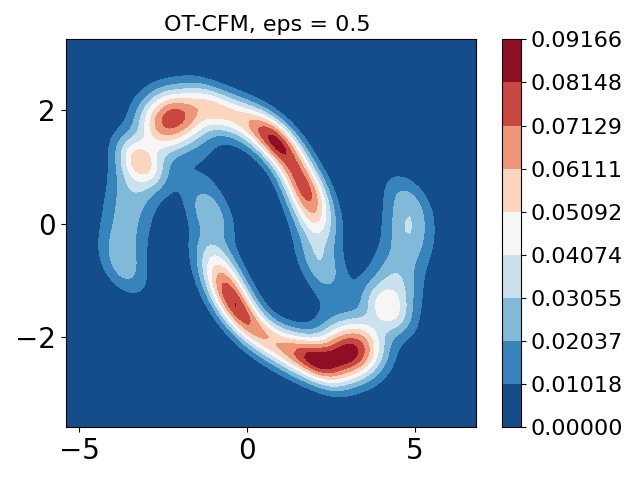}
    \includegraphics[width=0.3\textwidth]{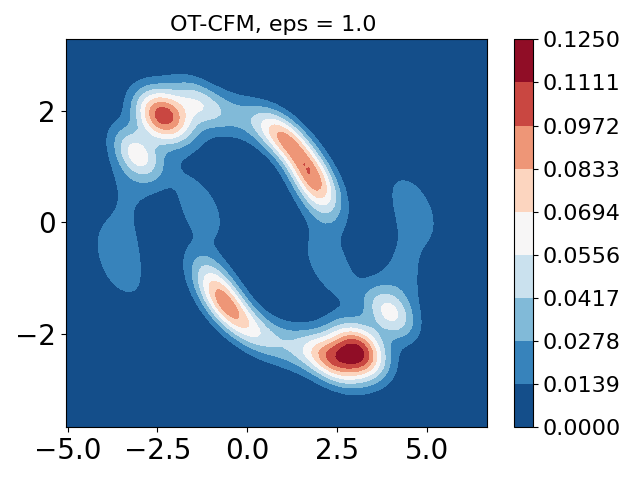}
    \\\includegraphics[width=0.45\textwidth]{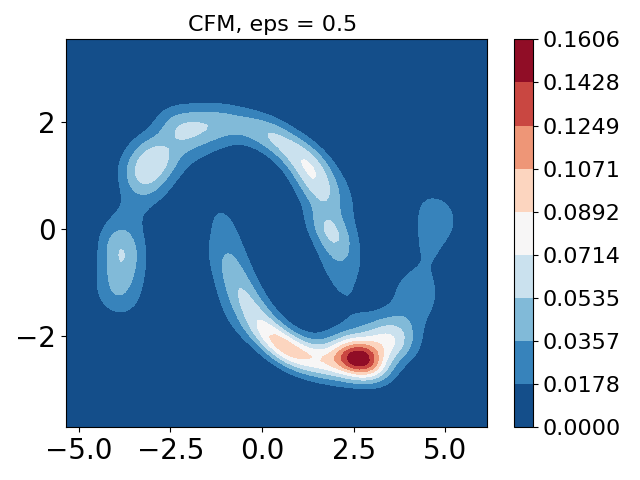}
    \includegraphics[width=0.45\textwidth]{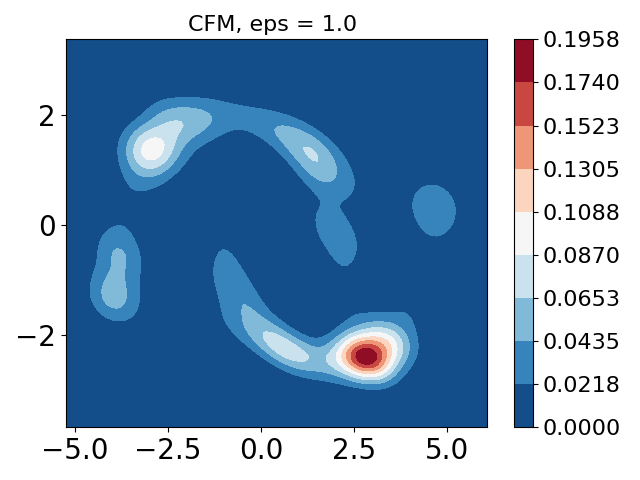}
    \\\includegraphics[width=0.45\textwidth]{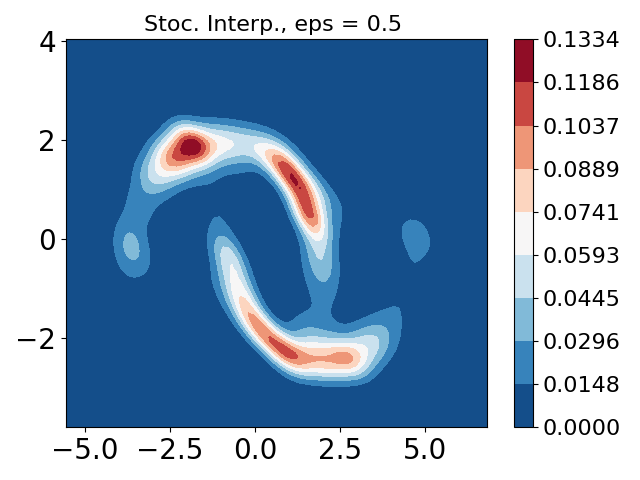}
    \includegraphics[width=0.45\textwidth]{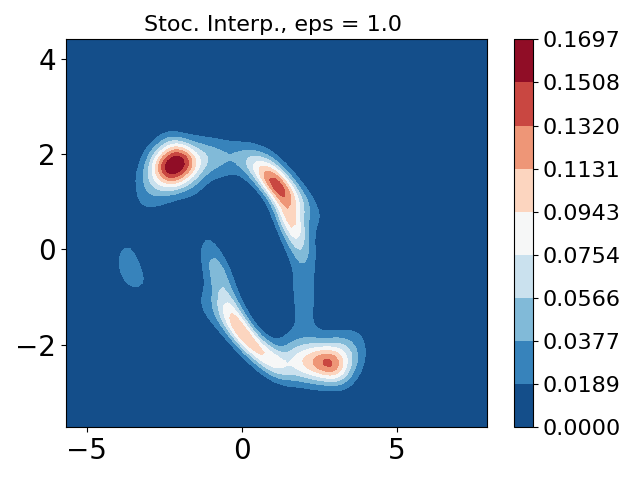}
    \caption{Top row: (Left) Two moons data distribution generated by an optimal transport-conditional flow matching (OT-CFM) algorithm \cite{tong2024improving}. OT-CFM dynamics perturbed by errors in the vector field of $L^\infty$ norm 0.5 (center) and 1 (right). 
    Middle row: densities predicted by non-rectified flow matching model with perturbations of size 0.5 (left) and 1.0 (right). Bottom: densities predicted by perturbed stochastic interpolant models.}
    \label{fig:cfm}
\end{figure}

\begin{figure}
    \centering
    \includegraphics[width=0.3\linewidth]{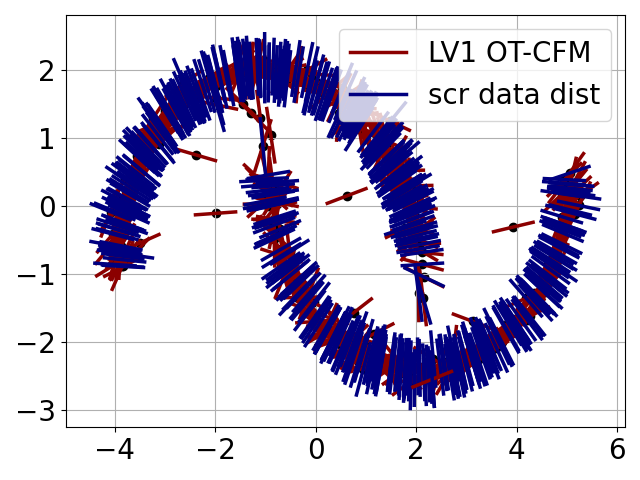}
    \includegraphics[width=0.3\textwidth]{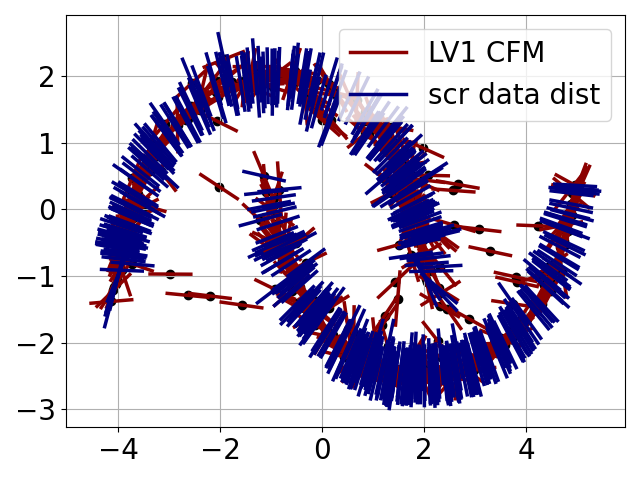}
    \includegraphics[width=0.3\textwidth]{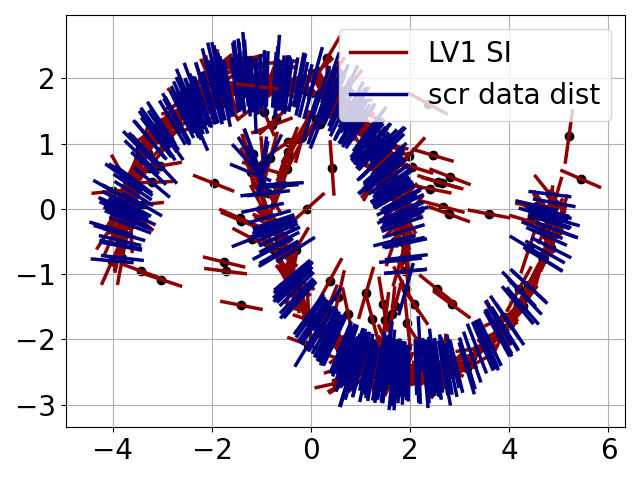}
    \includegraphics[width=0.3\textwidth]{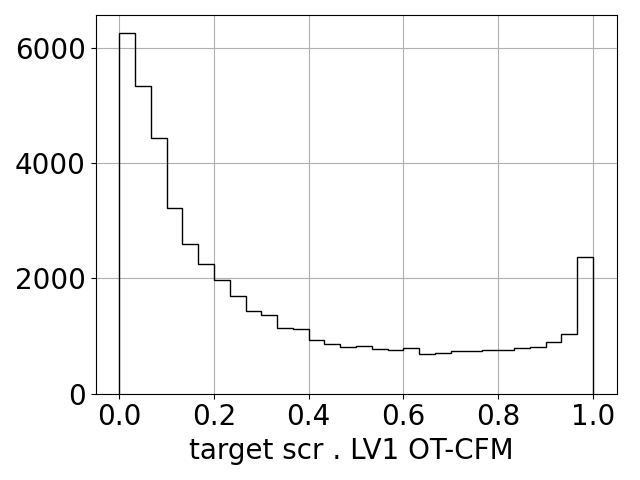}    
    \includegraphics[width=0.3\textwidth]{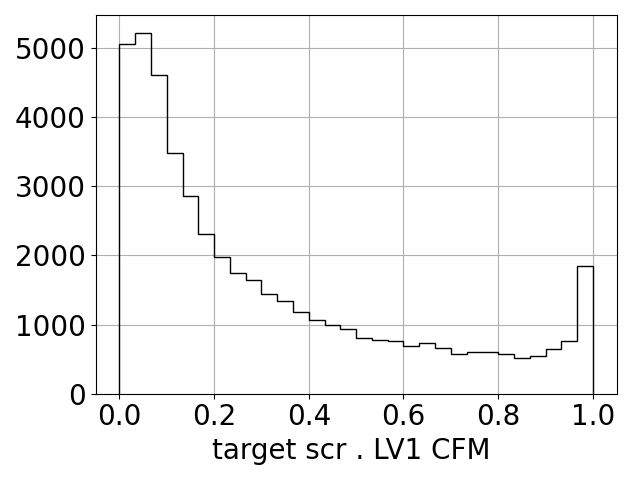}
    \includegraphics[width=0.3\textwidth]{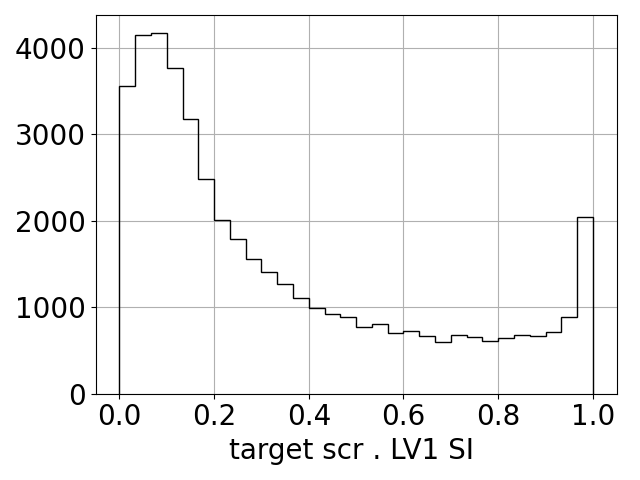}
    
    \caption{Top row: the target score vector field (blue) and the top LV (red) computed using unperturbed GMs: OT-CFM (left), CFM (center) and stochastic interpolant (right). Bottom row: the histograms of the dot products (absolute value) between the normalized target score and the leading LV (red) over 40,000 points. We see that the stochastic interpolant model and CFM are less aligned than OT-CFM according to our definition in this case.}
    \label{fig:twomoons}
\end{figure}

\textbf{Less robust flow matching models.} Following \verb+TorchCFM+ tutorials \cite{tong2024improving}, we learn vector fields $v_t$ with an MLP and 256 training samples per epoch from the 2 moons data distribution. The generated probability density is quite accurate for all of these models. In Figure \ref{fig:cfm} (top left), we show the generated density from Optimal Transport-Conditional Flow matching \cite{tong2024improving}. Next, we add a perturbation of size 0.5 and 1.0 in the $L^\infty$ norm to the learned vector field $v_t.$ The predicted densities for OT-CFM (first row) and stochastic interpolants (third row) seem to show the most robustness to the support, while for non-rectified flow matching  the densities do not seem to exhibit robustness of the support, in comparison. Visually, all of these 
models seem to be less robust (c.f. Figure 1) than score-based diffusions. It is noteworthy that this is not due to the effect of the noise in the diffusion process, as the same robustness is visible even for deterministic time-integration (probability flow ODEs) using the scores. Thus, the robustness seems to be dynamical, with different dynamics on probability space and the loss function/formulation together dictating specific dynamics on sample space.

To understand the effect of the dynamics further, we compute the LVs and LEs as before using an iterative QR algorithm. Recall that the LEs are recovered as the time-average of the log diagonal elements of $R_t$ (see section 4 of the main paper). We observe that some paths (i.e., with non-zero probability with respect to the source distribution) may have positive leading LEs, while SGMs were always observed to have stable LEs. We take the source density to be 8 Gaussians, but essentially similar results were obtained with a standard Gaussian source density.

In Figure \ref{fig:twomoons}, we show the leading LV (in blue) calculated for three different GMs in the top row. Also plotted is the score of the approximate 2 moons density (shown in red) in each case. The model OT-CFM seems to be most consistent with Theorem 4.3, showing most orthogonality with the score, or alignment, among the flow-matching models, but much less compared with diffusion models. To quantify the alignment, we plot the histogram of the absolute value of the dot product between the normalized score vectors. The generative model using optimal transport appears to have the best alignment since the histogram has a faster decay and a sharper peak at 0 (orthogonality between the score and the leading LV). Although Theorem 4.3 only proves that the orthogonality is a sufficient condition for the robustness of the support, it seems to agree qualititatively with the observations in Figure \ref{fig:cfm}. The most aligned model, being OT-CFM, also exhibits most stability of the predicted support to perturbations. Moreover, none of these models are as robust or as aligned as diffusion models for the same target. These interesting results can open up avenues to pinpoint the most prevalent cause of robustness or lack thereof of the support. Furthermore, our results can be a starting point to understanding the deep connection between the dynamics on sample space that leads to robustness and the dynamics on probability space (which does not uniquely determine the sample space dynamics).

\newpage
\bibliographystyle{plainnat}
\bibliography{ref}

\end{document}